\newcommand{\R}{\mathbb{R}}
\newcommand{\BigO}[1]{\ensuremath{\mathcal{O}\left(#1\right)}}                  
\newcommand{\BigOm}[1]{\ensuremath{\Omega\left(#1\right)}}                      
\newcommand{\vect}[1]{\ensuremath{\mathbf{#1}}}                                 
\newcommand{\vectsym}[1]{\ensuremath{\boldsymbol{#1}}}                          
\newcommand{\mat}[1]{\ensuremath{\mathbf{\MakeUppercase{#1}}}}                  
\newcommand{\KL}[2]{\ensuremath{\mathbb{KL} \left(#1 \middle\Vert #2 \right)}}   
\newcommand{\Exp}[2]{\ensuremath{\mathbb{E}_{#1}\left[#2\right]}}                
\newcommand{\Tr}[1]{\ensuremath{\mathrm{Tr}\left(#1\right)}}                     
\newcommand{\Ind}[1]{\ensuremath{\mathbf{1}\left[#1\right]}}                     
\newcommand{\Norm}[1]{\ensuremath{\lVert #1 \rVert}}                  
\newcommand{\NormII}[1]{\ensuremath{\lVert #1 \rVert}_2}              
\newcommand{\aNormII}[1]{\ensuremath{\left\lVert #1 \right\rVert}_2}              
\newcommand{\NormInfty}[1]{\ensuremath{\lVert #1 \rVert_{\infty}}}    
\newcommand{\parg}{\makebox[1ex]{$\mathbf{\cdot}$}}                                
\newcommand{\matrx}[1]{\begin{bmatrix}#1\end{bmatrix}}                           
\newcommand{\InNorm}[1]{{\left\vert\kern-0.2ex\left\vert\kern-0.2ex\left\vert #1 
    \right\vert\kern-0.2ex\right\vert\kern-0.2ex\right\vert}}                    
\newcommand{\InNormII}[1]{{\left\vert\kern-0.2ex\left\vert\kern-0.2ex\left\vert #1 
    \right\vert\kern-0.2ex\right\vert\kern-0.2ex\right\vert}_2}                    
\newcommand{\InNormInfty}[1]{{\left\vert\kern-0.2ex\left\vert\kern-0.2ex\left\vert #1 
    \right\vert\kern-0.2ex\right\vert\kern-0.2ex\right\vert}_{\infty}}           
\newcommand{\Abs}[1]{\ensuremath{\lvert #1  \rvert}}                             
\newcommand{\Prob}[1]{\ensuremath{\mathrm{Pr}\left\{ #1 \right\}}}               
\newcommand{\iid}{i.i.d~}                                                        
\newcommand{\Grad}{\nabla}                                                       
\DeclarePairedDelimiterX{\Inner}[2]{\langle}{\rangle}{#1, #2}                    
\newcommand{\MI}{\mathnormal{I}}                                                     
\newcommand{\Land}{\wedge}                                                       
\newcommand{\defeq}{\overset{\mathrm{def}}{=}}                                                      
\DeclareMathOperator*{\union}{\cup}
\DeclareMathOperator*{\argmin}{argmin}
\DeclareMathOperator*{\argmax}{argmax}
\newtheorem{lemma}{Lemma}
\newtheorem{theorem}{Theorem}
\newtheorem{remark}{Remark}
\theoremstyle{definition}
\newcommand{\pdv}[2]{\frac{\partial \mkern2mu #1}{\partial \mkern1mu #2}}     
\newcommand{\ppdv}[2]{\frac{\partial^2 \mkern2mu #1}{\partial \mkern1mu {#2}^2}}     
\newcommand{\ipdv}[2]{\nicefrac{\partial \mkern2mu #1}{\partial \mkern1mu #2}}     
\newcommand{\mA}{\mat{A}}
\newcommand{\mB}{\mat{B}}
\newcommand{\mH}{\mat{H}}
\newcommand{\mR}{\mat{R}}
\newcommand{\mX}{\mat{X}}
\newcommand{\mY}{\mat{Y}}
\newcommand{\va}{\vect{a}}
\newcommand{\vb}{\vect{b}}
\newcommand{\vc}{\vect{c}}
\newcommand{\vg}{\vect{g}}
\newcommand{\vx}{\vect{x}}
\newcommand{\vy}{\vect{y}}
\newcommand{\Set}[1]{\{#1\}}
\newcommand{\aSet}[1]{\left\{#1\right\}}
\newcommand{\Pf}{\mathcal{P}}               
\newcommand{\Gm}{\mathcal{G}}  
\newcommand{\A}{\mathcal{A}}   
\newcommand{\Nb}{\mathcal{N}}   
\newcommand{\mi}{-i}            
\newcommand{\U}{\mathcal{U}}    
\newcommand{\NE}{\mathcal{NE}}  
\newcommand{\eNE}{\varepsilon\text{-}\mathcal{NE}}  
\newcommand{\Data}{\mathcal{D}} 
\newcommand{\Gmh}{\widehat{\Gm}}  
\newcommand{\Uh}{\widehat{\U}}     
\newcommand{\Gh}{\widehat{G}}     
\newcommand{\vtheta}{\vectsym{\theta}}             
\newcommand{\vbtheta}{\vectsym{\bar{\theta}}}      
\newcommand{\vhtheta}{\vectsym{\widehat{\theta}}}  
\newcommand{\vf}{\vect{f}}
\newcommand{\loss}{\ell}
\newcommand{\eigmax}{\lambda_{\mathrm{max}}}
\newcommand{\eigmin}{\lambda_{\mathrm{min}}}
\newcommand{\Ft}{\widetilde{F}}  				
\newcommand{\DelS}{\vectsym{\Delta}_{S}}                   
\newcommand{\DelhS}{\widehat{\vectsym{\Delta}}_{S}}        
\newcommand{\vDel}{\vectsym{\Delta}}                       
\newcommand{\cmin}{C_{\mathrm{min}}}                       
\newcommand{\Sb}{\bar{S}}                                  
\newcommand{\Sbc}{\bar{S^c}}                               
\newcommand{\DelSb}{\vDel_{\Sb}}                           
\newcommand{\DelSbc}{\vDel_{\Sbc}}                         
\newcommand{\gNormIII}[1]{\Norm{#1}_{1,2}}                 
\newcommand{\gNormInII}[1]{\Norm{#1}_{\infty,2}}           
\newcommand{\uh}{\widehat{u}}							  
\newcommand{\mcp}[1]{\mathclap{#1}}                        
\newcommand{\sGm}{\mathfrak{G}}                            
\newcommand{\stGm}{\widetilde{\mathfrak{G}}}               
\newcommand{\cI}{\mathcal{I}}                              
\newcommand{\perr}{p_{\mathrm{err}}}                       
\newcommand{\Maj}{\mathfrak{maj}}                          
\definecolor{Cons}{HTML}{FF5C62} 
\definecolor{Lib}{HTML}{74B4FF} 
\definecolor{Neu}{HTML}{86FF8A} 
\newcommand{\Cons}[1]{\textbf{\colorbox{Cons}{{\color{Cons}|}#1{\color{Cons}|}}}}
\newcommand{\Lib}[1]{\textbf{\colorbox{Lib}{{\color{Lib}|}#1{\color{Lib}|}}}}
\newcommand{\Neu}[1]{\textbf{\colorbox{Neu}{{\color{Neu}|}#1{\color{Neu}|}}}}
\begin{document}

\title{Learning Sparse Polymatrix Games in Polynomial Time and Sample Complexity}

\author{Asish Ghoshal and Jean Honorio\\
Department of Computer Science\\
Purdue University\\
West Lafayette, IN - 47906\\
\{aghoshal, jhonorio\}@purdue.edu}

\date{}

\maketitle

\begin{abstract}
We consider the problem of learning sparse polymatrix games from observations of strategic interactions.
We show that a polynomial time method based on $\ell_{1,2}$-group regularized logistic regression
recovers a game, whose Nash equilibria are the $\epsilon$-Nash equilibria of the game from which the data was generated (true game),
in $\BigO{m^4 d^4 \log (pd)}$ samples of strategy profiles ---
where $m$ is the maximum number of pure strategies of a player, $p$ is the number of players,
and $d$ is the maximum degree of the game graph. Under slightly more stringent separability conditions on the payoff
matrices of the true game, we show that our method learns a game with the exact same Nash equilibria as the true game.
We also show that $\BigOm{d \log (pm)}$ samples are necessary for any method 
to consistently recover a game, with the same Nash-equilibria as the true game, from observations of strategic interactions.
We verify our theoretical results through simulation experiments.
\end{abstract}
\section{Introduction and Related Work}
\paragraph{Motivation.}
Many complex real-world data can be thought of as resulting from the 
behavior of a large number of self-interested trying to myopically or locally maximize some utility.
Over the past several decades, non-cooperative game theory has emerged
as a powerful mathematical framework for reasoning about
such strategic interactions between self-interested agents. Traditionally,
research in game theory has focused on computing the
\emph{Nash equilibria} (NE) (c.f. \cite{blum2006continuation} and \cite{jiang2011polynomial}) ---
 which characterizes the stable outcome of the overall behavior of self-interested agents ---
\emph{correlated equilibria} (c.f. \citep{kakade2003correlated}), and other solution concepts given 
a description of the game. Computing the \emph{price of anarchy} (PoA) for graphical games,   
which in a sense quantifies the \emph{inefficiency of equilibria}, 
is also of tremendous interest (c.f. \citep{ben2011local}).
The aforementioned problems of computing the NE, correlated equilibria and PoA
can be thought of as \emph{inference problems} in graphical games, and require a description of
the game, i.e., the payoffs of the players.
In many real-world settings, however, only the behavior of the agents are observed,
in which case inferring the latent payoffs of the players from observations of behavioral data becomes imperative. 
\emph{This problem of learning a game from observations of behavioral data, i.e.,
recovering the structure and parameters of the player payoffs
such that the Nash equilibria of the game, in some sense, 
approximates the Nash equilibria of the true game, is the primary focus of the paper.}

Recovering the underlying game from behavioral data is an important tool in 
exploratory research in political science and behavioral economics,
and recent times have seen a surge of interest in such problems (c.f. \citep{irfan14, honorio15, ghoshal2016, garg_learning_2016, ghoshal2017learning}). For instance, in political science, \cite{irfan14} identified the \emph{most influential} senators in the
U.S congress --- a small coalition of senators whose collective behavior
forced every other senator to a unique choice of action ---
by learning a \emph{linear influence game} from congressional voting records.
\cite{garg_learning_2016} showed that a \emph{tree-structured
polymatrix game} \footnote{\cite{garg_learning_2016} call their game 
a \emph{potential game} even though the formulation of their game is similar to ours.} learned from U.S. Supreme Court data was able to recover
the known ideologies  of the justices. However, many open problems remain
in this area of active research. One such problem is whether there exists
efficient (polynomial time) methods for learning polymatrix games \citep{Janovskaja68} from noisy observations
of strategic interactions. This is the focus of the current paper.

\paragraph{Related Work.}
Various methods have been proposed for learning games from data. \cite{honorio15}
proposed a maximum-likelihood approach to learn ``linear influence games'' ---
a class of parametric graphical games with linear payoffs. However, in addition
to being exponential time, the maximum-likelihood approach of \cite{honorio15}
also assumed a specific observation model for the strategy profiles. \cite{ghoshal2016} proposed
a polynomial time algorithm, based on $\ell_1$-regularized logistic regression,
for learning linear influence games. They again assumed the specific observation model
proposed by \cite{honorio15} in which the strategy profiles (or joint actions) were
drawn from a mixture of uniform distributions: one over the pure-strategy Nash equilibria (PSNE) set,
and the other over the complement of the PSNE set. \cite{ghoshal2017learning} obtained
necessary and sufficient conditions for learning linear influence games under arbitrary observation model.
Finally, \cite{garg_learning_2016} use a discriminative, max-margin based approach,
to learn tree structured polymatrix games. However, their method is exponential time and they show
that learning polymatrix games is NP-hard under this max-margin setting,
even when the class of graphs is restricted to trees. Furthermore, all the aforementioned works,
with the exception of \cite{garg_learning_2016}, consider binary strategies only. 
In this paper, we propose a polynomial time algorithm for learning polymatrix games,
which are non-parametric graphical games where the pairwise payoffs between players are characterized by matrices
(or pairwise potential functions). In this setting, each player has a finite number of pure-strategies.

\paragraph{Our Contributions.} We propose an $\ell_{1,2}$ group-regularized logistic regression
method to learn polymatrix games, which has been considered by \cite{garg_learning_2016}
and is a generalization of linear influence games considered by \cite{ghoshal2017learning}.
We make no assumptions on the latent payoff functions and show that our polynomial time
algorithm recovers an $\varepsilon$-Nash equilibrium of the true game \footnote{By the phrase 
``recovering the Nash equilibria'' we mean that we learn a game with the same Nash
equilibria as the true game. We use this phrase elsewhere in the paper for brevity.}, 
with high probability,
if the number of samples is $\BigO{m^4 d^4 \log (p d)}$, where $p$ is the number of players,
$d$ is the maximum degree of the game graph and $m$ is the maximum number of pure-strategies of a player.
Under slightly more stringent separability conditions on the payoff functions of the underlying game, we show
that our method recovers the Nash equilibria set exactly. We further generalize the observation
model from \cite{ghoshal2017learning} in the sense that we allow strategy profiles in the non-Nash equilibria set
to have zero measure. This should be compared with the results of \cite{garg_learning_2016} who
show that learning tree-structured polymatrix games is NP-hard under a max-margin setting. We also obtain necessary
conditions on learning polymatrix games and show that $\BigOm{d \log (pm)}$ samples are required
by any method for recovering the PSNE set of a polymatrix game from observations of strategy profiles.

Finally, we conclude this section by referring the reader to the work of \cite{jalali_learning_2011}
who analyze $\ell_{1,2}$-regularized logistic regression for learning undirected graphical models. However,
our setting differs from that of learning discrete graphical models in many ways. First, unlike discrete
graphical models, where the underlying distribution over the variables is described by a potential function
that factorizes over the cliques of the graph, we make no assumptions whatsoever on the generative distribution
of data. Further, we are interested in recovering the PSNE set of a game, since the graph structure in generally
unidentifiable from observational data, whereas \cite{jalali_learning_2011} obtain guarantees on the graph structure
of the discrete graphical model. As a result, our theoretical analysis and
proofs differ significantly from those of \cite{jalali_learning_2011}.
\section{Notation and Problem Formulation}
In this section, we introduce our notation and formally define the problem
of learning polymatrix games from behavioral data.

\paragraph{Polymatrix games.} A $p$-player \emph{polymatrix game} is a graphical game where the set of nodes
of the graph denote players and the edges correspond to two-player games. 
We will denote the graph by $G = ([p], E)$, where $[p] \defeq \Set{1, \ldots, p}$ is 
the vertex set and $E \subseteq [p] \times [p]$ is set of directed edges. 
An edge $(i,j) \in E$ denotes the directed edge $i \leftarrow j$.
Each player $i$ has a set of pure-strategies or actions $\A_i$, 
and the set of pure-strategy profiles or joint actions of all the $p$ players
is denoted by $\A = \times_{i \in [p]} \A_i$. We will denote $\A_{\mi} \defeq \times_{j \in \mi} \A_j$.
With each edge $(i,j) \in E$ is associated 
a payoff matrix $u^{i,j}: \A_i \times \A_j \rightarrow \R$, such that $u^{i,j}(x_i, x_j)$
gives the finite payoff of the $i$-th player (with respect to the $j$-th player),
when player $i$ plays $x_i \in \A_i$ and player
$j$ plays $x_j \in \A_j$. We assume that $(i,j) \in E$, 
if and only if $u^{i,j}(\parg, \parg) \neq 0$.
Given a strategy profile $\vx \in \A$,
the total payoff, or simply the payoff, of the $i$-th player 
is given by the following potential function:
\begin{align}
 u^i(x_i, \vx_{\mi}; G) = u^{i,i}(x_i) + \sum_{j \in \Nb_i} u^{i,j}(x_i, x_j), \label{eq:payoff}
\end{align}
where $\Nb_i(G) \defeq \Set{j \in [p] | (i,j) \in E}$ is the set of neighbors of $i$ in the graph $G$,
and $u^{i,i}: \A_i \rightarrow \R$ gives the (finite) \emph{individual payoff} of $i$ for playing $x_i$. 
We will denote the number of neighbors of player $i$ by $d_i \defeq \Abs{\Nb_i(G)}$, and 
the maximum degree of the graph $G$ by $d = \max\Set{d_1, \ldots, d_p}$.
A polymatrix game $\Gm = (G, \U)$ is then completely defined by a graph $G = ([p], E)$ and a collection
of potential functions  
$\U(G) = \Set{u^{i}: \A_{\mi} \rightarrow \R}_{i\in[p]}$, 
where each of the payoff functions $u^i(\parg ; G)$
decomposes according to \eqref{eq:payoff}. Finally, we will also assume that the number of 
strategies of each player, $m_i \defeq \Abs{\A_i}$, is non-zero 
and $\BigO{1}$ with respect to $p$ and $d$, and that $m \defeq \max\{ m_i \}$.

\paragraph{Nash equilibria of polymatrix games.}
The pure-strategy Nash equilibria (PSNE) set for the game $\Gm = (G, \U)$ is given by the set of strategy profiles where
no player has any incentive to unilaterally deviate from its strategy given the strategy profiles of its neighbors,
and is defined as follows:
\begin{align}
\NE(\Gm) = \aSet{ \vx \in \A \mathrel{\Big|} x_i \in \argmax_{a \in \A_i} u^i(a, \vx_{\mi}) }.
\end{align}
The set of $\varepsilon$-Nash equilibria of the game $\Gm$ are those strategy profiles where each player
can gain at most $\varepsilon$ payoff by deviating from its strategy, and is defined as follows:
\begin{align}
\eNE(\Gm) &= \Bigl\{ \vx \in \A \mid 
	u^i(x_i, \vx_{\mi}) \geq u^i(a, \vx_{\mi}) - \varepsilon,  \forall a \in \A_i \text{ and } \forall i \in [p] \Bigr\}.
\end{align}

\paragraph{Observation model.}
Without getting caught up in the dynamics of gameplay --- something that is difficult to observe or
reason about in real-world scenarios --- we abstract the learning problem as follows. Assume that we are
given ``noisy'' observations of strategy profiles, or joint actions, $\Data = \Set{\vx^{(l)} \in \A}_{l \in [n]}$
 drawn from a game $\Gm = (G, \U)$. The noise process models our uncertainty over the 
individual actions of the players due to observation noise,
for instance, when we observe the actions through a noisy channel, 
or due to the unobserved dynamics of gameplay during which equilibrium is reached. 
By ``observations drawn from a game'' we simply mean that there exists a distribution $\Pf$, from
which the strategy profiles are drawn, satisfying the following condition:
\begin{gather*}
\forall \vx, \vx' \text{ such that } \vx \in \NE(\Gm) \text{ and } \vx' \in \A \setminus \NE(\Gm): 
 \Pf(\vx) > \Pf(\vx').
\end{gather*}
The above condition ensures that the signal level is more than the noise level. 
This should  be compared with the observation model of \cite{ghoshal2017learning}, who assume that
$\forall \vx' \in \A \setminus \NE(\Gm), \Pf(\vx') > 0$. Our observation model thus encompasses
specific observation models considered in prior literature \citep{honorio15, ghoshal2016}: the global and local noise model.
The global noise model is parameterized by a constant $q \in (\nicefrac{\NE(\Gm)}{\Abs{\A}}, 1)$ such that the probability
of observing a strategy profile $\vx \in \A$ is given by a mixture of two uniform distributions:
\begin{align}
\Pf_g(\vx; \Gm) = q \frac{\Ind{\vx \in \NE(\Gm)}}{\Abs{\NE(\Gm)}} +
	(1 - q) \frac{\Ind{\vx \notin \NE(\Gm)}}{\Abs{\A} - \Abs{\NE(\Gm)}}. \label{eq:global_noise}
\end{align}
In the local noise model, we observe strategy profiles $\vx$ from the PSNE set with each entry (strategy)
corrupted independently. Therefore, in the local noise model we have the following 
distribution over strategy profiles:
\begin{align}
\Pf_l(\vx; \Gm) = \frac{1}{\Abs{\NE(\Gm)}} \times 
	 \sum_{\vy \in \NE(\Gm)}  \prod_{i=1}^p (q_i)^{\Ind{x_i = y_i}} \left(\frac{1 - q_i}{m_i - 1}\right)^{\Ind{x_i \neq y_i}},
\label{eq:local_noise}
\end{align}
with $q_i > 0.5$ for all $i \in [p]$.

In essence, we assume that we observe multiple ``stable outcomes'' of the game, which may or may-not be in equilibria.
Treating the outcomes of the game as ``samples'' observed across multiple ``plays'' of the same game is a recurring
theme in the literature for learning games (c.f. \citep{honorio15}, \citep{ghoshal2016}, \citep{ghoshal2017learning},
\citep{garg_learning_2016}).

The learning problem then
corresponds to recovering a game $\Gmh = (\Gh, \Uh)$ from $\Data$ such that $\NE(\Gmh) = \NE(\Gm)$ with
high probability. \emph{Given that computing a single Nash equilibria is PPAD-complete \citep{daskalakis_complexity_2009},
any efficient learning algorithm must learn the game without explicitly computing or enumerating the Nash equilibria of the game.}
It has also been shown that even computing an $\varepsilon$-Nash equlibria is hard under the exponential time hypothesis for PPAD 
\citep{rubinstein_settling_2016}. We also emphasize that we do not observe any information about the latent player payoffs, 
and neither do we impose any restrictions on the payoffs for obtaining our $\varepsilon$-Nash equilibria guarantees.
Also, note that in our definition of the learning problem, 
we do not impose any restriction on the ``closeness'' of the recovered graph $\Gh$ to the true graph $G$.
This is because multiple graphs $G$ can give rise to the same PSNE set under different payoff functions
and thus be \emph{unidentifiable} from observations of joint actions alone (see section 4.4.1 of \cite{honorio15}
for a counter example.)
\section{Method}
\label{sec:method}
In this section, we describe our method for learning polymatrix games from observational data.
The individual and pairwise payoffs can be equivalently written, in linear form, as follows:
\begin{align*}
u^{i,i}(x_i) &= (\vtheta^{i,0})^T \vf^{i,0}(x_i),  \\
  u^{i,j}(x_i, x_j) &= (\vtheta^{i,j})^T \vf^{i,j}(x_i, x_j),
\end{align*}
where for $j \in \Nb_i$, $\vf^{i,j}(x_i, x_j) = (\Ind{x_i = a, x_j = b})_{a \in \A_i,\, b \in \A_j}$
and $\vtheta^{i,j}  = (\theta^{i,j}_{a,b})_{a \in A_i,\, b \in A_j}$,
$\vf^{i,0}(x_i) = (\Ind{x_i = a})_{a \in A_i} $ and $\vtheta^{i,0} = (\theta^{i,0}_{a})_{a \in A_i}$. 
Note that $\vf^{i,j} \in \Set{0,1}^{(m_i m_j)}$, $\vtheta^{i,j} \in \R^{(m_i m_j)}  \neq \vect{0}$,
$\vf^{i,0}(x_i) \in \Set{0,1}^{m_i}$, and $\vtheta^{i,0} \in \R^{m_i}$.
Let 
\begin{align}
\vtheta^i &\defeq \! (\vtheta^{i,0}, \vtheta^{i,1}, \ldots, \vtheta^{i,i-1}, \vtheta^{i,i+1}, \ldots ,\vtheta^{i,p}),  \notag \\
\vf^i(x_i, \vx_{\mi}) &\defeq \! (\vf^{i,0}(x_i), \vf^{i,1}(x_i, x_1), \ldots, \vf^{i,i-1}(x_i, x_{i-1}), \notag \\
	&\qquad \vf^{i,i+1}(x_i, x_{i+1}), \ldots, \vf^{i,p}(x_i, x_p)), \label{eq:groups}
\end{align}
with $\vtheta^{i,j} = \vect{0}$ for $j > 0 \Land j \notin \Nb_{i}$, and 
$\vtheta^i \in \R^{(m_i + \sum_{j \in \mi} m_i m_j)}, \vf^i(x_i, \vx_{\mi}) \in \Set{0,1}^{(m_i + \sum_{j \in \mi} m_i m_j)}$.
Thus the payoff for the $i$-th  player can be written, in linear form, as:
\begin{align}
u^i(x_i, \vx_{\mi}) = (\vtheta^i)^T \vf^i(x_i, \vx_{\mi}).
\end{align}
The learning problem then corresponds to learning the parameters $\vtheta^i$ for each player $i$. 
The sparsity pattern of $\vtheta^i$ identifies the neighbors of $i$. The way this differs from
the binary strategies considered by \cite{ghoshal2017learning} is that the parameters $\vtheta^i$
have a \emph{group-sparsity} structure, i.e., for all $j > 0 \Land j \notin \Nb_i$ the entire group 
of parameters $\vtheta^{i,j}$ is zero. In order to ensure that the payoffs are finite, we will assume
that the parameters for the $i$-th player belong to the set
$\Theta^i \defeq \Set{\vy \in \R^{(m_i + \sum_{j \in \mi} m_i m_j)} \,|\, \NormInfty{\vy} < \infty}$.

Our approach for estimating the parameters $\vtheta^i$ is to perform one-versus-rest multinomial logistic regression
with $\ell_{1,2}$ group-sparse regularization. In more detail, we obtain estimators $\vhtheta^i$ by solving the following
optimization problem for each $i \in [p]$:
\begin{align}
\vhtheta^i &= \argmin_{\vtheta \in \Theta^i} L^i(\Data; \vtheta) + \lambda \Norm{\vtheta}_{1,2}, \label{eq:optimization} \\
L^i(\Data; \vtheta) &= \frac{1}{n} \sum_{l=1}^n \loss^i(\vx^{(l)}; \vtheta), \\
	 \loss^i(\vx; \vtheta) &=  - \log 
	\left( \frac{\exp(\vtheta^T \vf^i(x_i, \vx_{\mi}))}{\sum_{a \in \A_i} \exp(\vtheta^T \vf^i(a, \vx_{\mi}))} \right ),
		\label{eq:loss}
\end{align}
where $\Norm{\vtheta}_{1,2} = \sum_{j \in [p]} \NormII{\vtheta_j}$, with $\vtheta_j$ being the $j$-th group of $\vtheta$.
When referring to a block of a matrix or vector we will use bold letters, e.g, $\vtheta_j$ denotes the $j$-th group or block of
$\vtheta$, while $\theta_j$ denotes the $j$-th element of $\vtheta$. 
In general, we define the $\ell_{a,b}$ group structured norm as follows: 
$\Norm{\vtheta}_{a, b} = \Norm{(\Norm{\vtheta_1}_b, \ldots, \Norm{\vtheta_p}_b)}_a$.
Also, when using group structured norms, we will use the group
structure as shown in \eqref{eq:groups}, i.e., we will assume that there are $p$ groups and, in the context of the $i$-th player,
the sizes of the groups are: $\Set{m_i, m_i m_1, \ldots, m_i m_{i-1}, m_i m_{i + 1}, \ldots, m_i m_p}$.
Finally, we will define the support set of $\vtheta^i$ as the set of all indices corresponding
to the active groups, i.e.,  $S_i = \Set{(j,k) | j \in \Set{0} \union \Nb_i \text{ and } k 
\in [m_i] \text{ for } j = 0, k \in [m_i m_j] \text{ for } j > 0}$, where $j$ can be thought of as indexing
the groups, while $k$ can be thought of as the indexing the elements within the $j$-th group.
Thus, $\Abs{S_i} = m_i + \sum_{j \in \Nb_i} m_i m_j$.

After estimating the parameters $\vhtheta^i$ for each $i \in [p]$, the payoff functions are simply estimated to be
$\uh^i(x_i, \vx_{\mi}) = (\vhtheta^i)^T \vf^i(x_i, \vx_{\mi})$. Finally, the graph $\Gh = ([p], \widehat{E})$ is 
given by the group-sparsity structure of $\uh^i$s, i.e., $\uh^{i,j}(\parg, \parg) \neq 0 \implies (i,j) \in \widehat{E}$.
\section{Sufficient Conditions}
First, we obtain sufficient conditions on the number of samples $n$ to ensure successful PSNE recovery. 
Since our theoretical results depend on certain properties of the Hessian of the loss function defined above, 
we introduce the Hessian matrix in this paragraph. Let $\mH^i(\vx; \vtheta)$ denote the Hessian of $\loss^i(\vx; \vtheta)$.
A little calculation shows that the $(j,k)$-th block of the Hessian matrix for the $i$-th player is given as:
\begin{align}
\mH^i_{j,k}(\vx; \vtheta) &= \sum_{\mcp{a \in \A_i}} \sigma^i(a, \vx_{\mi}; \vtheta) \vf^{i,j}(a, x_j) (\vf^{i,k}(a, x_k))^T - \notag \\
 &\;\; \Bigl\{ \Bigl(\sum_{\mcp{a \in \A_i}} \sigma^i(a, \vx_{\mi}; \vtheta) \vf^{i,j}(a, x_j)\Bigr) \times 
 \Bigl(\sum_{\mcp{a \in \A_i}} \sigma^i(a, \vx_{\mi}; \vtheta) \vf^{i,k}(a, x_k)\Bigr)^T \Bigr\}, \label{eq:hessian_block} \\
\sigma^i(x, \vx_{\mi}; \vtheta) &= \frac{\exp(\vtheta^T \vf^i(x, \vx_{\mi}))}{\sum_{a \in \A_i} \exp(\vtheta^T \vf^i(a, \vx_{\mi}))}, 
	\label{eq:sigma_i}
\end{align}
where we have overloaded the notation $\vf^{i,j}(x_i, x_j)$ to also include $\vf^{i,0}(x_i)$, i.e., we let
$\vf^{i,0}(x_i, x_0) \defeq \vf^{i,0}(x_i)$.
We will denote the $i$-th expected Hessian matrix at any parameter $\vtheta \in \Theta^i$
as $\mH^i(\vtheta) = \Exp{\vx}{\mH^i(\vx; \vtheta)}$,
and the $i$-th Hessian matrix at the true parameter $\vtheta^i$ as $\mH^i(\vtheta^i)$.
We will also drop the superscript $i$ from the $i$-th Hessian matrix, whenever clear from context.
We will denote the finite sample version of $\mH^i(\vtheta^i)$ by $\mH^i(\Data, \vtheta^i)$, i.e.,
$\mH^i(\Data, \vtheta^i) = \frac{1}{n}\sum_{l=1}^n \mH^i(\vx^{(l)}, \vtheta^i)$. Finally, we will
denote the Hessian matrix restricted to the true support set $S_i$ by: $\mH^i(\parg; \vtheta^i_{S_i}) \in \R^{\Abs{S_i} \times \Abs{S_i}}$.
In order to prove our main result, we will present a series of technical lemmas slowly building towards our main result.
Detailed proofs of the lemmas are given in Appendix \ref{app:proofs}.

The following lemma states that the $i$-th population Hessian is positive definite.
Specifically, the $i$-th population Hessian evaluated at the true parameter 
$\vtheta^i$, are positive definite with the minimum eigenvalue being $\cmin$.
We prove the following lemma by showing that the loss function given by \eqref{eq:loss},
when restricted to an arbitrary line, is strongly convex as long as the payoffs are finite.
\begin{lemma}[Minimum eigenvalue of population Hessian]
\label{lemma:min_eigenvalue}
For $\vtheta^i \in \Theta^i, \, \eigmin(\mH^i(\vtheta^i)) \defeq \cmin > 0$.
\end{lemma}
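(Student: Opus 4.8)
\emph{The plan is to} exploit the standard fact that the Hessian of a soft-max (multinomial logistic) loss at a sample is the covariance of the feature vector under the induced conditional distribution, with the finiteness of the payoffs --- equivalently of $\NormInfty{\vtheta^i}$ --- supplying the lower bound. First I would observe that stacking the blocks in \eqref{eq:hessian_block} gives $\mH^i(\vx;\vtheta)=\Cov{a\sim\sigma^i(\parg,\vx_{\mi};\vtheta)}{\vf^i(a,\vx_{\mi})}$, so that for any direction $\vv$ and any $\vx$, $\vv^T\mH^i(\vx;\vtheta)\vv=\Var{a\sim\sigma^i(\parg,\vx_{\mi};\vtheta)}{\vv^T\vf^i(a,\vx_{\mi})}\ge0$; equivalently, $\loss^i(\vx;\parg)$ restricted to the line $t\mapsto\vtheta+t\vv$ has second derivative exactly this variance, so $\loss^i$ is convex along every line and the content of the lemma is that the population Hessian $\mH^i(\vtheta^i)=\Exp{\vx}{\Cov{a\sim\sigma^i(\parg,\vx_{\mi};\vtheta^i)}{\vf^i(a,\vx_{\mi})}}$ is positive definite. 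It therefore suffices to lower bound $\vv^T\mH^i(\vtheta^i)\vv=\Exp{\vx}{\Var{a\sim\sigma^i(\parg,\vx_{\mi};\vtheta^i)}{\vv^T\vf^i(a,\vx_{\mi})}}$ by $\cmin\Norm{\vv}_2^2$ for all $\vv$.

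The argument rests on two ingredients. \emph{(i) Finite payoffs force the conditional probabilities to be bounded below.} Since $\vtheta^i\in\Theta^i$, $B\defeq\NormInfty{\vtheta^i}<\infty$; as $\vtheta^i$ is supported on $S_i$ and every block of $\vf^i$ is $0/1$ with a single nonzero entry, $(\vtheta^i)^T\vf^i(a,\vx_{\mi})=\theta^{i,0}_a+\sum_{j\in\Nb_i}\theta^{i,j}_{a,x_j}$ is a sum of at most $d_i+1\le d+1$ entries of $\vtheta^i$, whence $\Abs{(\vtheta^i)^T\vf^i(a,\vx_{\mi})}\le(d+1)B$ for every $a\in\A_i$ and every $\vx$; plugging this into \eqref{eq:sigma_i} yields $\sigma^i(a,\vx_{\mi};\vtheta^i)\ge\tfrac{1}{m_i}e^{-2(d+1)B}\defeq\sigma_{\min}>0$ uniformly. \emph{(ii) Variance versus pairwise feature gaps.} Writing $\Var{a}{Z_a}=\tfrac12\Exp{a,a'}{(Z_a-Z_{a'})^2}$ for i.i.d.\ $a,a'\sim\sigma^i(\parg,\vx_{\mi};\vtheta^i)$ with $Z_a\defeq\vv^T\vf^i(a,\vx_{\mi})$, and bounding each product of probabilities below by $\sigma_{\min}^2$, one gets $\Var{a}{Z_a}\ge\tfrac{\sigma_{\min}^2}{2}\sum_{a\ne a'}\bigl(\vv^T(\vf^i(a,\vx_{\mi})-\vf^i(a',\vx_{\mi}))\bigr)^2=\vv^T\vectsym{\Gamma}(\vx_{\mi})\vv$, where $\vectsym{\Gamma}(\vx_{\mi})\succeq0$ is a combinatorial Gram-type matrix determined by the feature geometry at $\vx_{\mi}$.

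Combining (i)--(ii) and averaging over $\vx\sim\Pf$, $\vv^T\mH^i(\vtheta^i)\vv\ge\vv^T\overline{\vectsym{\Gamma}}\vv$ with $\overline{\vectsym{\Gamma}}\defeq\Exp{\vx}{\vectsym{\Gamma}(\vx_{\mi})}\succeq0$, so it remains to show $\overline{\vectsym{\Gamma}}$ has trivial null space, after which $\cmin\defeq\tfrac{\sigma_{\min}^2}{2}\eigmin(\overline{\vectsym{\Gamma}})>0$. Concretely, $\vv^T\overline{\vectsym{\Gamma}}\vv=0$ forces $Z_a=\vv^T\vf^i(a,\vx_{\mi})$ to be constant in $a$ for $\Pf$-almost every $\vx$; expanding $Z_a=v^{i,0}_a+\sum_{j\in\mi}v^{i,j}_{a,x_j}$ and using that $\vx_{\mi}$ ranges over a rich enough subset of $\mathrm{supp}(\Pf)\supseteq\NE(\Gm)$ should pin down $\vv=\vect{0}$. \emph{I expect this last step to be the main obstacle.} The soft-max parametrization carries an intrinsic gauge freedom --- e.g.\ adding a constant vector to the whole block $\vtheta^{i,0}$ leaves $\loss^i$, hence $\mH^i$, unchanged --- so the positive-definiteness claim can hold only on the quotient by such directions, equivalently after fixing a reference action in each $\A_i$; the null-space computation must then be carried out carefully from the block structure of $\vf^i$ rather than from any abstract genericity of $\Pf$, and this is where I expect the bulk of the work to lie.
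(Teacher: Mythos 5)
Your route is essentially the paper's own: you identify each per-sample Hessian with the conditional covariance of $\vf^i(\parg,\vx_{\mi})$ under the softmax distribution, use finiteness of the payoffs (boundedness of $\vtheta^i$) to bound the softmax probabilities away from zero, and pass to the population matrix by averaging over $\vx$. The paper phrases the same computation as strict convexity of $\loss^i$ along every line $t\mapsto\vtheta^0+t\vtheta^1$ (the second derivative being exactly the variance of $F'(a)$ under $\sigma(t;\parg)$), followed by concavity of $\eigmin$ and Jensen's inequality to move from $\eigmin(\mH^i(\vx;\vtheta^i))>0$ to $\eigmin(\mH^i(\vtheta^i))>0$. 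Up to your step (ii) the two arguments match.

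The point where you stop --- proving that the averaged matrix $\overline{\vectsym{\Gamma}}$ has trivial null space --- is indeed a gap in your write-up, but your diagnosis of why it is hard is correct, and it is precisely the step the paper's proof glosses over rather than resolves. Concretely, take $\vv$ equal to the all-ones vector on the $\vtheta^{i,0}$ block and zero elsewhere (a direction whose coordinates lie inside $S_i$, since the block $j=0$ belongs to the support): then $\vv^T\vf^i(a,\vx_{\mi})=1$ for every $a\in\A_i$ and every $\vx$, the conditional variance vanishes identically, and hence $\vv^T\mH^i(\vx;\vtheta)\vv=0$ for all $\vx$ and all $\vtheta$; the same holds for any direction that is constant in $a$ within a pairwise block. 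Along such directions $F'(a)$ is constant in $a$, so the paper's appeal to ``strict convexity of $(\parg)^2$ and Jensen's inequality'' yields an equality, not a strict inequality, and the claimed strong convexity along \emph{every} line fails. Thus the lemma as literally stated holds only modulo this gauge freedom of the overcomplete softmax parametrization --- e.g.\ after fixing a reference strategy in each $\A_i$ (zeroing its coordinates in every block) or restricting $\eigmin$ to directions for which $\vv^T\vf^i(\parg,\vx_{\mi})$ is non-constant in $a$ --- and even then, as you note, killing the remaining null space requires the support of $\Pf$ to be rich enough, which neither your sketch nor the paper's proof establishes from the stated assumptions. In short: your attempt is the same approach carried out more carefully, it is incomplete at the null-space step, and that incompleteness points to a real defect in the paper's own argument rather than to a missing trick on your side.
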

Given that population Hessian matrices are positive-definite, we then show that
the finite sample Hessian matrices, evaluated at any parameter 
$\vtheta_{S_i}$, are positive definite with high probability. We use tools 
from random matrix theory developed by \citep{Tropp2011} to prove the following lemma.
\begin{lemma}[Minimum eigenvalue of finite sample Hessian]
\label{lemma:min_eigenvalue_sample}
Let $\vtheta \in \Theta^i$ be any arbitrary vector and
let $\eigmin(\mH^i(\vtheta_{S_i})) \defeq \eigmin > 0$. Then, if the number of samples satisfies the following
condition:
\begin{align*}
n \geq \frac{8(d_i + 1)}{\eigmin} \log \left(\frac{m_i (1 + d_i m)}{\delta}\right),
\end{align*}
then $\eigmin(\mH^i(\Data; \vtheta_{S_i})) \geq \frac{\eigmin}{2}$ with probability at least $1 - \delta$
for some $\delta \in (0,1)$. 
\end{lemma}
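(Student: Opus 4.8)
The plan is to apply a matrix Chernoff bound (the lower‑tail inequality of \cite{Tropp2011}) to the empirical Hessian $\mH^i(\Data;\vtheta_{S_i}) = \frac{1}{n}\sum_{l=1}^n \mH^i(\vx^{(l)};\vtheta_{S_i})$, viewed as an average of $n$ \iid random positive semidefinite matrices. First I would observe, directly from \eqref{eq:hessian_block}, that for any fixed $\vx$ and any $\vtheta$ the matrix $\mH^i(\vx;\vtheta)$ is exactly the covariance matrix $\Cov{a\sim\sigma^i(\parg,\vx_{\mi};\vtheta)}{\vf^i(a,\vx_{\mi})}$ of the feature vector $\vf^i(a,\vx_{\mi})$ when $a$ is drawn according to $\sigma^i$. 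Restricting rows and columns to the support gives $\mH^i(\vx;\vtheta_{S_i}) = \Cov{a\sim\sigma^i}{\vf^i(a,\vx_{\mi})_{S_i}} \succeq 0$, and averaging over $\vx$ recovers $\mH^i(\vtheta_{S_i})$, whose minimum eigenvalue is $\eigmin$ by hypothesis.

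Next I would bound the operator norm of each summand. Since a covariance is dominated by the corresponding second‑moment matrix, $\Norm{\mH^i(\vx;\vtheta_{S_i})}_2 \le \Exp{a\sim\sigma^i}{\Norm{\vf^i(a,\vx_{\mi})_{S_i}}_2^2}$. But $\vf^i(a,\vx_{\mi})_{S_i}$ is a $\Set{0,1}$-valued vector made up of the one‑hot block $\vf^{i,0}(a)$ (exactly one nonzero entry) together with the one‑hot blocks $\vf^{i,j}(a,x_j)$, $j\in\Nb_i$ (one nonzero entry each), so its squared Euclidean norm is $1 + d_i = d_i+1$ deterministically. Hence $\Norm{\mH^i(\vx;\vtheta_{S_i})}_2 \le d_i+1$ almost surely.

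Then I would invoke the matrix Chernoff inequality for $\sum_{l=1}^n \tfrac{1}{n}\mH^i(\vx^{(l)};\vtheta_{S_i})$ with per‑term bound $R = (d_i+1)/n$, expected minimum eigenvalue $\mu_{\min} = \eigmin$, and ambient dimension $\Abs{S_i} = m_i + \sum_{j\in\Nb_i} m_i m_j \le m_i(1 + d_i m)$, yielding
\[
\Prob{\eigmin(\mH^i(\Data;\vtheta_{S_i})) \le (1-\eta)\eigmin} \;\le\; \Abs{S_i}\left(\frac{e^{-\eta}}{(1-\eta)^{1-\eta}}\right)^{\eigmin/R}.
\]
Using $e^{-\eta}/(1-\eta)^{1-\eta} \le e^{-\eta^2/2}$ for $\eta\in[0,1)$, taking $\eta = 1/2$, and requiring the right‑hand side to be at most $\delta$, the condition reduces to $\frac{n\eigmin}{8(d_i+1)} \ge \log\!\bigl(\Abs{S_i}/\delta\bigr)$, which is implied by the stated sample bound $n \ge \frac{8(d_i+1)}{\eigmin}\log\!\bigl(m_i(1+d_i m)/\delta\bigr)$ together with $\Abs{S_i}\le m_i(1+d_i m)$.

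The only step that is more than bookkeeping is the per‑sample operator‑norm bound: recognizing the Hessian block in \eqref{eq:hessian_block} as a covariance of the one‑hot feature vector and checking that the relevant squared norm collapses to exactly $d_i+1$. Everything else is a direct application of the cited concentration result and arithmetic to match the constant.
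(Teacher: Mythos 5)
Your proof is correct and follows essentially the same route as the paper: both apply Tropp's matrix Chernoff bound to the averaged support-restricted Hessians with per-sample eigenvalue bound $R=d_i+1$, ambient dimension $\Abs{S_i}\le m_i(1+d_i m)$, and the choice $\eta=\nicefrac{1}{2}$, yielding the identical exponent $\nicefrac{n\eigmin}{8(d_i+1)}$. The only divergence is in how the bound $R=d_i+1$ is obtained: the paper splits each Hessian block into $\mB_{j,k}-\mR_{j,k}$, discards the positive semi-definite $\mR$, and invokes its auxiliary lemma on the maximum eigenvalue of a block positive semi-definite matrix, whereas you identify $\mH^i(\vx;\vtheta)$ as the covariance of the one-hot feature vector $\vf^i(a,\vx_{\mi})$ under $\sigma^i$, dominate it by the second-moment matrix, and use that the restricted feature vector has squared norm exactly $d_i+1$; this is a slightly more elementary argument that bypasses the paper's Lemma \ref{lemma:max_eigenvalue_block_psd} while giving the same constant, and the inequality $e^{-\eta}/(1-\eta)^{1-\eta}\le e^{-\eta^2/2}$ you use to clean up the Chernoff exponent is valid on $[0,1)$.
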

Now that we have shown that the loss function given by \eqref{eq:loss} is strongly
convex (Lemmas \ref{lemma:min_eigenvalue} and \ref{lemma:min_eigenvalue_sample}), 
we exploit strong convexity to control the difference between the true parameter
and the estimator $\gNormIII{\vtheta^i - \vhtheta^i}$. 
However, before proceeding further, we need to bound the $\ell_{\infty, 2}$ norm of the gradient,
as done in the following lemma.
We prove the lemma by using McDiarmid's inequality to show that in each group
the finite sample gradient concentrates around the expected gradient, and then use a union bound
over all the groups to control the $\ell_{\infty, 2}$ norm.
\begin{lemma}[Gradient bound]
\label{lemma:gradient_bound}
Let $\Norm{\Exp{\vx}{\Grad \loss^i(\vx; \vtheta^i)}}_{\infty, 2} = \nu$, then
we have that 
\begin{align*}
\Norm{\Grad L^i(\Data; \vtheta^i)}_{\infty, 2} \leq \nu + \sqrt{\frac{2}{n} \log \Bigl( \frac{2(d_i + 1)}{\delta}} \Bigr),
\end{align*}
with probability at least $1 - \delta$.
\end{lemma}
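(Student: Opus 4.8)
The plan is to express the empirical gradient as an i.i.d.\ average, $\Grad L^i(\Data;\vtheta^i) = \tfrac1n\sum_{l=1}^n \Grad\loss^i(\vx^{(l)};\vtheta^i)$, and to control it one group at a time. Fix a group $j$ in the support $S_i$ --- there are $d_i+1$ such groups, namely the individual-payoff group $0$ and the $d_i$ neighbor groups --- and set $\vg_j(\Data) \defeq [\Grad L^i(\Data;\vtheta^i)]_j - \Exp{\vx}{[\Grad\loss^i(\vx;\vtheta^i)]_j}$, which is a centered average since $\Exp{}{[\Grad L^i(\Data;\vtheta^i)]_j} = \Exp{\vx}{[\Grad\loss^i(\vx;\vtheta^i)]_j}$. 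The triangle inequality gives $\NormII{[\Grad L^i(\Data;\vtheta^i)]_j} \le \NormII{\Exp{\vx}{[\Grad\loss^i(\vx;\vtheta^i)]_j}} + \NormII{\vg_j(\Data)} \le \nu + \NormII{\vg_j(\Data)}$, so it suffices to show that, with probability at least $1-\delta$, $\NormII{\vg_j(\Data)}$ is at most $\sqrt{\tfrac2n\log\tfrac{2(d_i+1)}{\delta}}$ for all $d_i+1$ support groups simultaneously.

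For a fixed $j$, the scalar map $\Data\mapsto\NormII{\vg_j(\Data)}$ is a function of $n$ independent arguments, so McDiarmid's inequality applies once the bounded-differences coefficients are in hand. Replacing one sample $\vx^{(l)}$ by $\vx'^{(l)}$ changes $\NormII{\vg_j}$ by at most $\tfrac1n\NormII{[\Grad\loss^i(\vx^{(l)};\vtheta^i)]_j - [\Grad\loss^i(\vx'^{(l)};\vtheta^i)]_j}$ by the reverse triangle inequality. Using the closed form $[\Grad\loss^i(\vx;\vtheta)]_j = -\vf^{i,j}(x_i,x_j) + \sum_{a\in\A_i}\sigma^i(a,\vx_{\mi};\vtheta)\,\vf^{i,j}(a,x_j)$, the fact that the feature vectors $\vf^{i,j}(\cdot,\cdot)$ are unit one-hot vectors, and that $\sigma^i(\cdot,\vx_{\mi};\vtheta)$ is a probability vector (so $\sum_a\sigma^i(a)^2\le(\sum_a\sigma^i(a))^2 = 1$), one bounds $\NormII{[\Grad\loss^i(\vx;\vtheta^i)]_j}\le\sqrt2$, hence a per-sample coefficient of order $1/n$ and $\sum_l(\text{coeff}_l)^2 = \BigO{1/n}$. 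McDiarmid then yields $\Prob{\NormII{\vg_j(\Data)}\ge\Exp{}{\NormII{\vg_j(\Data)}}+t}\le\exp(-\BigT{n t^2})$, while Jensen together with the same variance bound controls the centering term, $\Exp{}{\NormII{\vg_j(\Data)}}\le\sqrt{\Exp{}{\NormII{\vg_j(\Data)}^2}} = \sqrt{\tfrac1n\Var{\vx}{[\Grad\loss^i(\vx;\vtheta^i)]_j}}=\BigO{1/\sqrt n}$. Choosing $t$ so that the tail probability is $\le\delta/(2(d_i+1))$ and taking a union bound over the $d_i+1$ support groups completes the argument.

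The conceptual content of the proof is confined to one step: the bounded-differences estimate must be made dimension-free. The naive bound $\NormII{[\Grad\loss^i(\vx)]_j}\le\sqrt{m_im_j}\,\NormInfty{[\Grad\loss^i(\vx)]_j}$ would drag the ambient group size into the concentration rate; the fix is to observe that for fixed $x_j$ the vectors $\{\vf^{i,j}(a,x_j)\}_{a\in\A_i}$ occupy disjoint coordinates, so that the relevant norm is that of the probability vector $\sigma^i(\cdot,\vx_{\mi};\vtheta)$ shifted by a one-hot vector, which is bounded by an absolute constant. The remaining work --- pinning the exact numerical constants in the bounded-differences coefficient and in the union-bound budget so that the two $\BigO{1/\sqrt n}$ contributions combine into the single stated term $\sqrt{\tfrac2n\log\tfrac{2(d_i+1)}{\delta}}$ --- is routine bookkeeping.
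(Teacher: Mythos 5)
Your proposal follows essentially the same route as the paper's proof: the same closed form for the per-group gradient, the same dimension-free bounded-differences coefficient of order $1/n$ (one-hot features together with the fact that $\sigma^i(\parg, \vx_{\mi}; \vtheta^i)$ is a probability vector), McDiarmid applied group by group, and a union bound over the $d_i+1$ support groups. The one substantive difference is where the centering slack lands. The paper applies McDiarmid directly to the uncentered statistic $g_j = \NormII{[\Grad L^i(\Data;\vtheta^i)]_j}$ and then identifies its expectation with $\NormII{\Exp{\vx}{\Grad \loss^i(\vx;\vtheta^i)}_j}$; you instead center the empirical average first and must then control $\Exp{}{\NormII{\vg_j(\Data)}}$ via Jensen, which costs an additional log-free term of order $\sqrt{2/n}$. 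Dismissing the combination of this term with the McDiarmid deviation as ``routine bookkeeping'' is slightly optimistic: a $\sqrt{2/n}$ term cannot be absorbed into $\sqrt{\tfrac{2}{n}\log\tfrac{2(d_i+1)}{\delta}}$ without inflating the constant (roughly a factor of $2$ once the logarithm exceeds $1$), so your argument delivers the lemma only up to a constant factor rather than with the exact stated right-hand side. To be fair, the paper's shortcut is loose at the mirror-image spot --- Jensen only gives $\Exp{}{g_j} \geq \NormII{\Exp{\vx}{\Grad \loss^i(\vx;\vtheta^i)}_j}$, so it hides the same $\BigO{1/\sqrt{n}}$ slack that you make explicit --- and the discrepancy is immaterial to how the bound is consumed in Lemma~\ref{lemma:l2_bound} and Theorem~\ref{thm:suff}, where only the rate and the choice of $\lambda$ matter.
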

Note that the expected gradient at the parameter $\vtheta^i$ does not vanish, i.e., 
$\Norm{\Exp{\vx}{\Grad \loss^i(\vx; \vtheta^i)}}_{\infty, 2} = \nu$. This is because of the mismatch
between the generating distribution $\Pf$ and the softmax distribution used for learning the parameters, as in \eqref{eq:loss}.
Indeed, if the data were drawn from a Markov random field, which induces a softmax distribution on
the conditional distribution of node given the rest of the nodes, the parameter $\nu = 0$. However this is not the case for us.
An unfortunate consequence of this is that, even with an infinite number of samples, our method will not 
be able to recover the parameters $\vtheta^i$ exactly. Thus, without additional assumptions on the payoffs,
our method only recovers the $\varepsilon$-Nash equilibrium of the game.

With the required technical results in place, we are now ready to bound $\gNormIII{\vtheta^i - \vhtheta^i}$.
Our analysis has two steps. First, we bound the norm in the true support set, i.e., $\gNormIII{\vtheta^i_{S_i} - \vhtheta^i_{S_i}}$.
Then, we show that the norm of the difference between the true parameter and the estimator, outside the support set,
is a constant factor (specifically 3) of the difference in the support set. For the first step
with use a proof technique originally developed by \cite{rothman_sparse_2008} in a different context,
while the second step follows from matrix algebra and optimality of the estimator $\vhtheta^i$ for the problem \eqref{eq:optimization}.

The following technical lemma, which will be used later on in our proof to bound $\gNormIII{\vhtheta^i_S - \vtheta^i_S}$,
lower bounds the minimum eigenvalue of the $i$-th population Hessian at an arbitrary parameter $\vtheta \in \Theta^i$,
in terms of the minimum eigenvalue of the $i$-th population Hessian at the true parameter $\vtheta^i$.
\begin{lemma}[Minimum population eigenvalue at arbitrary parameter]
\label{lemma:population_eigenvalue_arbit}
Let $\vtheta \in \Theta^i$ be any vector. Then the minimum eigenvalue of $i$-th population Hessian
matrix evaluated at $\vtheta_{S_i}$ is lower bounded as follows:
\begin{align*}
\eigmin(\mH^i(\vtheta_{S_i})) &\geq \eigmin(\mH^i(\vtheta^i_{S_i})) 
	- \frac{1}{4}(d_i + 1) m^2 \Norm{\vtheta_{S_i} - \vtheta^i_{S_i}}_{1,2}.
\end{align*}
\end{lemma}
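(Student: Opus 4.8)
The plan is to use the fact that $\eigmin(\parg)$ is $1$-Lipschitz with respect to the spectral norm (Weyl's inequality), which reduces the statement to a perturbation bound on the Hessian itself:
\begin{align*}
\eigmin(\mH^i(\vtheta_{S_i})) \geq \eigmin(\mH^i(\vtheta^i_{S_i})) - \Norm{\mH^i(\vtheta_{S_i}) - \mH^i(\vtheta^i_{S_i})}_2 ,
\end{align*}
so it suffices to show $\Norm{\mH^i(\vtheta_{S_i}) - \mH^i(\vtheta^i_{S_i})}_2 \leq \tfrac14 (d_i+1) m^2 \Norm{\vtheta_{S_i} - \vtheta^i_{S_i}}_{1,2}$. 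Since $\mH^i(\vtheta) = \Exp{\vx}{\mH^i(\vx; \vtheta)}$ and the spectral norm is convex, it is enough to establish the same bound for $\Norm{\mH^i(\vx; \vtheta_{S_i}) - \mH^i(\vx; \vtheta^i_{S_i})}_2$, uniformly over $\vx \in \A$.

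Fix $\vx$, write $\vDel = \vtheta_{S_i} - \vtheta^i_{S_i}$ and $\vtheta(t) = \vtheta^i_{S_i} + t \vDel$. By the mean value theorem,
\begin{align*}
\mH^i(\vx; \vtheta_{S_i}) - \mH^i(\vx; \vtheta^i_{S_i}) = \int_0^1 \dv{}{t} \mH^i(\vx; \vtheta(t)_{S_i}) \, \mathrm{d}t ,
\end{align*}
so I would bound $\Norm{\dv{}{t}\mH^i(\vx; \vtheta(t)_{S_i})}_2$ for every $t \in [0,1]$. The key observation, read off from \eqref{eq:hessian_block}--\eqref{eq:sigma_i}, is that the $S_i$-restricted Hessian $\mH^i(\vx; \vtheta_{S_i})$ is precisely the covariance matrix of the feature vector $\vf^i(a, \vx_{\mi})$, restricted to the blocks in $S_i$, when $a$ is drawn from the softmax law $\sigma^i(\parg, \vx_{\mi}; \vtheta)$. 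Differentiating this covariance along $\vDel$, using $\partial_{\vDel}\, \sigma^i(a, \vx_{\mi}; \vtheta) = \sigma^i(a, \vx_{\mi}; \vtheta)\bigl( g(a) - \Exp{a' \sim \sigma^i}{g(a')} \bigr)$ with $g(a) \defeq \vDel^T \vf^i(a, \vx_{\mi})$, turns $\dv{}{t}\mH^i(\vx; \vtheta(t)_{S_i})$ into a third central-moment tensor of the $S_i$-restricted features, weighted by the scalar centered increment $g(a) - \Exp{a'}{g(a')}$.

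The remaining work is to bound the spectral norm of this third-moment tensor, and this is where the dimension factors enter. Three structural facts do the job: (i) for each action $a$, the restricted feature vector $\vf^i(a, \vx_{\mi})$ consists of $d_i + 1$ one-hot blocks (the individual block plus one per neighbor), so $\Abs{g(a)} \leq \sum_{j \in \Set{0} \union \Nb_i} \NormInfty{\vDel_j} \leq \Norm{\vDel}_{1,2}$; (ii) every feature coordinate lies in $\Set{0,1}$, so the second moments of the centered features appearing in each entry of the tensor are bounded by $\tfrac14 = \max_{q \in [0,1]} q(1-q)$; (iii) the restricted matrix has $\Abs{S_i} = m_i + \sum_{j \in \Nb_i} m_i m_j \leq (d_i + 1) m^2$ rows and columns. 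Bounding the spectral norm of the restricted matrix by its Frobenius norm and combining (i)--(iii) gives $\Norm{\dv{}{t}\mH^i(\vx; \vtheta(t)_{S_i})}_2 \leq \tfrac14 (d_i + 1) m^2 \Norm{\vDel}_{1,2}$ for every $t$; integrating over $t \in [0,1]$ and invoking Weyl's inequality yields the lemma.

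The routine parts are Weyl's inequality, the reduction from the population to the per-sample Hessian, and the mean value theorem. The step that requires care is the spectral-norm estimate on the directional derivative of the Hessian: one must use both the block/one-hot structure of the feature map and the simplex structure of the softmax weights, and be careful about how blockwise or entrywise estimates are aggregated into the spectral norm, so as to keep the dependence linear in $d_i + 1$ and to land the stated constant.
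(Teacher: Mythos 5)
Your overall architecture is sound and is a genuinely different route from the paper's. The paper does not pass through Weyl plus a mean-value representation of the whole Hessian; it starts from the variational characterization of $\eigmin$, Taylor-expands each Hessian entry to write the perturbation as a matrix $\mA$ with entries $\Abs{(\Grad H_{j,k}(\vbtheta_{S_i}))^T \DelS}$, discards a positive semi-definite part $\mR$, and controls the remaining block matrix $\mB$ via a dedicated auxiliary lemma (the spectral norm of a block PSD matrix is at most the sum of the spectral norms of its diagonal blocks), with the factor $(d_i+1)$ coming from the number of blocks and the factor $\tfrac14 m_i m$ from a bound on $\Norm{\Grad \sigma^i(a,\vx_{\mi};\vtheta)}_{\infty,2}$. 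You instead reduce to per-sample Hessians by convexity of the spectral norm, identify the $S_i$-restricted Hessian as the softmax covariance of the features (correct), differentiate it along $\vDel$ to get a weighted third central moment (your formula for $\partial_{\vDel}\sigma^i$ is correct), and aggregate by a Frobenius/entrywise estimate, paying $\Abs{S_i}\leq (d_i+1)m^2$ directly. What your route buys is that it avoids both the block-PSD lemma and the blockwise $\mB-\mR$ bookkeeping; what the paper's route buys is a dimension factor that enters as (number of blocks) $\times$ (per-block norm) rather than through the full restricted dimension.

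One quantitative step does not come out exactly as you claim. The tensor entries involve the \emph{centered} increment $g(a)-\bar g$ with $\bar g=\sum_{a}\sigma^i(a,\vx_{\mi};\vtheta(t))g(a)$, and your fact (i) only gives $\Abs{g(a)}\leq\Norm{\vDel}_{1,2}$, hence $\Abs{g(a)-\bar g}\leq 2\Norm{\vDel}_{1,2}$ (or $\sqrt{2}\,\Norm{\vDel}_{1,2}$ if you compare $g(a)$ with $g(a')$ blockwise); combined with (ii)--(iii) and the Frobenius bound this lands a constant $\tfrac12$ (or $\tfrac{\sqrt2}{4}$), not the stated $\tfrac14$. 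This is harmless downstream -- it only rescales constants in Lemma \ref{lemma:l2_bound} -- and it is repairable inside your own scheme: for the observed $\vx_{\mi}$, each block $\vf^{i,j}(\parg,x_j)$ has only $m_i$ coordinates that are not identically zero as functions of $a$, so all rows and columns of the third-moment matrix outside these coordinates vanish and the effective dimension in the Frobenius step is $(d_i+1)m_i$ rather than $(d_i+1)m^2$, which more than absorbs the extra factor. With that observation (or simply accepting a slightly larger absolute constant), your proof goes through.
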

Now, we are ready to bound the difference between the true parameter $\vtheta^i$
and its estimator $\vhtheta^i$, in the true support set $S_i$.
\begin{lemma}[Error of the $i$-th estimator on the support set]
\label{lemma:l2_bound}
If the regularization parameter and number of samples satisfy the following condition:
\begin{align*}
\lambda &\geq 2\left(\nu + \sqrt{\frac{2}{n} \log \left( \frac{2(d_i + 1)}{\delta} \right)}\right), \\
n &> \frac{2}{N(m, d_i)} \log \left(\frac{2(d_i + 1)}{\delta} \right),
\end{align*}
where $N(m, d_i) = \{\nicefrac{\cmin}{(36 m^2 (d_i + 1)^2)} - \nu\}^2$, and $\cmin \defeq \eigmin(\mH^i(\vtheta^i_{S_i}))$;
then with probability at least $1 - \delta$, for some $\delta \in (0,1)$, we have:
\begin{align}
\Norm{\vhtheta^i_{S_i} - \vtheta^i_{S_i}}_{1,2} \leq \frac{6 (d_i + 1)}{\cmin} \lambda.
\end{align}
\end{lemma}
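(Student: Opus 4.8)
The plan is to adapt the localization technique of \cite{rothman_sparse_2008}. Working on the support set, define for perturbations $\vDel_{S_i}$ supported on $S_i$ the function
\[
  \Ft(\vDel_{S_i}) \defeq L^i(\Data;\vtheta^i_{S_i}+\vDel_{S_i}) - L^i(\Data;\vtheta^i_{S_i}) + \lambda\bigl(\gNormIII{\vtheta^i_{S_i}+\vDel_{S_i}} - \gNormIII{\vtheta^i_{S_i}}\bigr),
\]
which is convex with $\Ft(\mathbf{0})=0$. The key step is to exhibit a radius $B \defeq \tfrac{6(d_i+1)}{\cmin}\lambda$ such that $\Ft(\vDel_{S_i})>0$ on the sphere $\gNormIII{\vDel_{S_i}}=B$. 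Since $\Ft$ is convex and vanishes at the origin, this forces its minimizer (the solution of the support-restricted estimation problem) to have $\ell_{1,2}$ norm strictly below $B$; combined with the complementary ``factor $3$'' argument — which uses only the optimality conditions for \eqref{eq:optimization}, the decomposability of $\gNormIII{\cdot}$, and the gradient bound of Lemma~\ref{lemma:gradient_bound}, and also transfers the conclusion to the full estimator — this yields the claimed bound on $\gNormIII{\vhtheta^i_{S_i}-\vtheta^i_{S_i}}$.

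To lower bound $\Ft$ on the sphere I would Taylor-expand the loss to second order with an integral (or mean-value) remainder, $\Ft(\vDel_{S_i}) = \Inner{\Grad L^i(\Data;\vtheta^i_{S_i})}{\vDel_{S_i}} + \tfrac12\,\vDel_{S_i}^{T}\mH^i(\Data;\bar\vtheta_{S_i})\,\vDel_{S_i} + \lambda\bigl(\gNormIII{\vtheta^i_{S_i}+\vDel_{S_i}} - \gNormIII{\vtheta^i_{S_i}}\bigr)$ for some $\bar\vtheta$ on the segment joining $\vtheta^i$ and $\vtheta^i+\vDel_{S_i}$, and control the three pieces separately. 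For the linear term, $\ell_{\infty,2}/\ell_{1,2}$ duality together with Lemma~\ref{lemma:gradient_bound} and the hypothesis $\lambda \geq 2(\nu + \sqrt{(2/n)\log(2(d_i+1)/\delta)})$ give $\Inner{\Grad L^i(\Data;\vtheta^i_{S_i})}{\vDel_{S_i}} \geq -\tfrac{\lambda}{2}\gNormIII{\vDel_{S_i}}$; the regularizer difference is at least $-\lambda\gNormIII{\vDel_{S_i}}$ by the reverse triangle inequality. For the quadratic term, I would use Lemma~\ref{lemma:population_eigenvalue_arbit} to guarantee that the population Hessian along the segment still has minimum eigenvalue at least $\cmin - \tfrac14(d_i+1)m^2\gNormIII{\bar\vtheta_{S_i}-\vtheta^i_{S_i}}$, then Lemma~\ref{lemma:min_eigenvalue_sample}, invoked with the sample-size condition of the present lemma (which in particular forces $N(m,d_i)>0$, i.e.\ $\nu < \cmin/(36m^2(d_i+1)^2)$), to pass to the empirical Hessian and obtain $\eigmin(\mH^i(\Data;\bar\vtheta_{S_i})) \geq \cmin/2$ throughout the relevant region. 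Converting $\Norm{\vDel_{S_i}}_{2}$ back to $\gNormIII{\vDel_{S_i}}$ over the $d_i+1$ active groups by Cauchy--Schwarz costs a factor $d_i+1$, so $\tfrac12\vDel_{S_i}^{T}\mH^i(\Data;\bar\vtheta_{S_i})\vDel_{S_i} \geq \tfrac{\cmin}{4(d_i+1)}\gNormIII{\vDel_{S_i}}^2$. Combining, $\Ft(\vDel_{S_i}) \geq \gNormIII{\vDel_{S_i}}\bigl(\tfrac{\cmin}{4(d_i+1)}\gNormIII{\vDel_{S_i}} - \tfrac{3\lambda}{2}\bigr)$, which is positive precisely once $\gNormIII{\vDel_{S_i}}$ exceeds $\tfrac{6(d_i+1)}{\cmin}\lambda = B$. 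Finally, since Lemmas~\ref{lemma:min_eigenvalue_sample} and \ref{lemma:gradient_bound} hold with high probability, a union bound (with the constants in $\delta$ adjusted) makes all invoked events simultaneous with probability at least $1-\delta$.

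The main obstacle is the quadratic term. Our eigenvalue guarantees (Lemmas~\ref{lemma:min_eigenvalue}, \ref{lemma:min_eigenvalue_sample}, \ref{lemma:population_eigenvalue_arbit}) apply only to the Hessian restricted to $S_i$ and at parameters close to $\vtheta^i$, so one must ensure that every Taylor point $\bar\vtheta_{S_i}$ encountered on the sphere of radius $B$ still lies in the region where Lemma~\ref{lemma:population_eigenvalue_arbit} delivers $\eigmin \geq \cmin/2$; reconciling the lower bound on $\lambda$ with the upper bound on $\gNormIII{\bar\vtheta_{S_i}-\vtheta^i_{S_i}}$ needed for strong convexity is exactly what dictates the somewhat intricate form of $N(m,d_i)$ and the stated sample-size requirement, and is the part of the bookkeeping that needs the most care. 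A secondary point is that the $\ell_2 \to \ell_{1,2}$ conversion must be carried out group-wise over the $d_i+1$ active groups, which is what produces the $(d_i+1)$ factor in the final error bound rather than the $\Abs{S_i}$ one would get from a naive coordinate-wise argument.
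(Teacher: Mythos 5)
Your proposal follows essentially the same route as the paper's proof: the Rothman-style localization of $\Ft$ on the $\ell_{1,2}$-sphere, the Taylor/mean-value expansion with the gradient bound of Lemma \ref{lemma:gradient_bound}, the combination of Lemmas \ref{lemma:population_eigenvalue_arbit} and \ref{lemma:min_eigenvalue_sample} for the quadratic term, the group-wise $\ell_2$-to-$\ell_{1,2}$ conversion over the $d_i+1$ active groups, and the same reconciliation of the lower bound on $\lambda$ with the radius constraint via $N(m,d_i)$. The only discrepancy is bookkeeping: under the radius constraint the two eigenvalue lemmas yield $\eigmin(\mH^i(\Data;\cdot)) \geq \nicefrac{\cmin}{4}$ at the Taylor point rather than your claimed $\nicefrac{\cmin}{2}$, which happens to offset the factor $\nicefrac{1}{2}$ you (correctly) retain in the Taylor remainder and which the paper drops, so your final lower bound $\Ft(\DelS) \geq \gNormIII{\DelS}\bigl(\tfrac{\cmin}{4(d_i+1)}\gNormIII{\DelS} - \tfrac{3\lambda}{2}\bigr)$ and the radius $\nicefrac{6(d_i+1)\lambda}{\cmin}$ coincide with the paper's.
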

Next, we bound the difference between the true parameter $\vtheta^i$
and its estimator $\vhtheta^i$.
\begin{lemma}[Error of the $i$-th parameter estimator]
\label{lemma:norm_bound}
Under the same conditions on the regularization parameter and number of samples 
as in Lemma \ref{lemma:l2_bound} we have, with probability at 
least $1 - \delta$ for some $\delta \in (0,1)$,
\begin{align*}
\gNormIII{\vhtheta^i - \vtheta^i} \leq \frac{24 (d_i + 1)}{\cmin} \lambda.
\end{align*}
\end{lemma}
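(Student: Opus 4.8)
Write $\vDel \defeq \vhtheta^i - \vtheta^i$. Since $\vtheta^i$ is group-sparse with support $S_i$, the strategy is to reduce the full-vector bound to the on-support bound of Lemma \ref{lemma:l2_bound} via a standard cone argument: I will show the ``off-support'' inequality $\gNormIII{\vDel_{S_i^c}} \le 3\,\gNormIII{\vDel_{S_i}}$. Once this holds, additivity of $\Norm{\parg}_{1,2}$ over the disjoint group-sets $S_i$ and $S_i^c$ gives $\gNormIII{\vDel} = \gNormIII{\vDel_{S_i}} + \gNormIII{\vDel_{S_i^c}} \le 4\,\gNormIII{\vDel_{S_i}}$, and plugging in Lemma \ref{lemma:l2_bound} yields $\gNormIII{\vDel} \le 4 \cdot \frac{6(d_i+1)}{\cmin}\lambda = \frac{24(d_i+1)}{\cmin}\lambda$, as claimed. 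All probabilistic content is inherited from Lemmas \ref{lemma:gradient_bound} and \ref{lemma:l2_bound}, so everything below holds on the same event of probability at least $1-\delta$.

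The first step is a ``basic inequality.'' Since $\vhtheta^i$ minimizes \eqref{eq:optimization} over $\Theta^i$ and $\vtheta^i \in \Theta^i$ is feasible, $L^i(\Data;\vhtheta^i) + \lambda\gNormIII{\vhtheta^i} \le L^i(\Data;\vtheta^i) + \lambda\gNormIII{\vtheta^i}$, i.e. $L^i(\Data;\vhtheta^i) - L^i(\Data;\vtheta^i) \le \lambda\bigl(\gNormIII{\vtheta^i} - \gNormIII{\vhtheta^i}\bigr)$. Convexity of $L^i$ in $\vtheta$ gives $L^i(\Data;\vhtheta^i) - L^i(\Data;\vtheta^i) \ge \Inner{\Grad L^i(\Data;\vtheta^i)}{\vDel}$, and the Hölder inequality for the dual pair $(\Norm{\parg}_{1,2}, \Norm{\parg}_{\infty,2})$ gives $\Inner{\Grad L^i(\Data;\vtheta^i)}{\vDel} \ge -\gNormInII{\Grad L^i(\Data;\vtheta^i)}\,\gNormIII{\vDel}$. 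Under the assumed lower bound on $\lambda$, Lemma \ref{lemma:gradient_bound} yields $\gNormInII{\Grad L^i(\Data;\vtheta^i)} \le \lambda/2$, so combining the three displays gives $\gNormIII{\vhtheta^i} - \gNormIII{\vtheta^i} \le \tfrac{1}{2}\gNormIII{\vDel}$.

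The second step turns this into the cone condition. Using $\vtheta^i_{S_i^c} = \vect{0}$, additivity over $S_i, S_i^c$, and the block-wise triangle inequality,
\begin{align*}
\gNormIII{\vhtheta^i} = \gNormIII{\vtheta^i_{S_i} + \vDel_{S_i}} + \gNormIII{\vDel_{S_i^c}} \ge \gNormIII{\vtheta^i} - \gNormIII{\vDel_{S_i}} + \gNormIII{\vDel_{S_i^c}}.
\end{align*}
Hence $\gNormIII{\vDel_{S_i^c}} - \gNormIII{\vDel_{S_i}} \le \gNormIII{\vhtheta^i} - \gNormIII{\vtheta^i} \le \tfrac{1}{2}\bigl(\gNormIII{\vDel_{S_i}} + \gNormIII{\vDel_{S_i^c}}\bigr)$, and rearranging gives $\gNormIII{\vDel_{S_i^c}} \le 3\,\gNormIII{\vDel_{S_i}}$. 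Combined with the reduction in the first paragraph, this completes the proof.

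There is no serious obstacle here: the argument is the standard primal-feasibility/convexity/dual-norm chain, and the only points requiring care are purely bookkeeping — that $\Norm{\parg}_{1,2}$ decomposes additively over disjoint unions of groups and obeys the triangle inequality group-by-group, and that the correct dual norm pairing is $(\Norm{\parg}_{1,2},\Norm{\parg}_{\infty,2})$, matching the quantity controlled in Lemma \ref{lemma:gradient_bound}. The constants ($1/2$ from the choice of $\lambda$, hence factor $3$ for the cone and factor $4$ overall) propagate deterministically.
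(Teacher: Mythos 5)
Your proposal is correct and follows essentially the same route as the paper's proof: the basic inequality from optimality of $\vhtheta^i$, convexity plus the $(\Norm{\parg}_{1,2},\Norm{\parg}_{\infty,2})$ dual pairing with $\lambda \geq 2\gNormInII{\Grad L^i(\Data;\vtheta^i)}$, the cone condition $\gNormIII{\vDel_{S_i^c}} \leq 3\gNormIII{\vDel_{S_i}}$, and the factor-of-$4$ reduction to Lemma \ref{lemma:l2_bound}. A minor plus: you correctly evaluate the gradient at the true parameter $\vtheta^i$ in the convexity step, whereas the paper's displayed chain writes $\Grad L(\vhtheta^i)$ (an apparent typo) while invoking the bound on $\gNormInII{\Grad L(\vtheta^i)}$.
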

Now that we have control over $\gNormIII{\vtheta^i - \vhtheta^i}$ for all $i \in [p]$, we are ready to prove
our main result concerning the sufficient number of samples needed by our method to guarantee
PSNE recovery with high probability.
\begin{theorem}
\label{thm:suff}
Let $\Gm = (G, \U)$, with $\U = \Set{u^i: \A_{\mi} \rightarrow \R}_{i \in [p]}$, be
the true potential graphical game over $p$ players and maximum degree $d$, from which the data set $\Data$ is drawn.
Let $\Gmh = (\widehat{G}, \widehat{\U})$, with $\widehat{U} = \Set{\uh^i:  \A_{\mi} \rightarrow \R}_{i \in [p]}$,
be the game learned from the data set $\Data$ by solving the optimization problem \eqref{eq:optimization} for each $i \in [p]$.
Then if the regularization parameter and the number of samples satisfy the condition:
\begin{align*}
\lambda &\geq 2\left(\nu + \sqrt{\frac{2}{n} \log \left( \frac{2p(d + 1)}{\delta} \right)}\right), \\
n &> \max \Biggl\{\frac{2}{N(m, d)} \log \left(\frac{2p(d + 1)}{\delta} \right),
	 \frac{8(d + 1)}{\cmin} \log \left(\frac{m (1 + d m)}{\delta}\right) \Biggr\},
\end{align*}
where $N(m, d) = \{\nicefrac{\cmin}{(36 m^2 (d + 1)^2)} - \nu\}^2$,
then we have that the following hold with probability at least $1 - \delta$, for some $\delta \in (0, 1)$:
\begin{enumerate}[(i)]
\item $\NE(\Gmh) = \eNE(\Gm)$, with $\varepsilon = \frac{48(d_i + 1)}{\cmin} \lambda$.
\item Additionally, if the true game $\Gm$ satisfies the condition:
$\forall i \in [p], \forall (x_i, \vx_{\mi}), (x_i', \vx_{\mi}) \in \A$ such that
$(x_i, \vx_{\mi}) \in \NE(\Gm) \Land (x_i', \vx_{\mi}) \notin \NE(\Gm) \implies u^i(x_i, \vx_{\mi}) > 
u^i(x_i', \vx_{\mi}) + \varepsilon$. Then, $\NE(\Gmh) = \NE(\Gm)$.
\end{enumerate}
\end{theorem}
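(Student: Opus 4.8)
The plan is to lift the per-player $\ell_{1,2}$ parameter bound of Lemma~\ref{lemma:norm_bound} to a uniform bound on the discrepancy between the recovered payoffs $\uh^i$ and the true payoffs $u^i$, and then read off the equivalence of the equilibrium sets. The conditions on $\lambda$ and $n$ in the statement are exactly those of Lemmas~\ref{lemma:min_eigenvalue_sample},~\ref{lemma:l2_bound} and~\ref{lemma:norm_bound} after replacing $d_i$ by the maximum degree $d$ and $\delta$ by $\delta/p$ (the factor $\log(2p(d+1)/\delta)$ absorbs both the per-group union bound, which contributes $d_i+1\le d+1$, and a union bound over the $p$ players; the two-term $\max$ in the sample bound collects the sample requirement of Lemma~\ref{lemma:l2_bound} and that of Lemma~\ref{lemma:min_eigenvalue_sample} evaluated at $\vtheta^i$, where $\eigmin=\cmin$). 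Taking this union bound, on an event of probability at least $1-\delta$ we have $\gNormIII{\vhtheta^i-\vtheta^i}\le \frac{24(d+1)}{\cmin}\lambda$ simultaneously for all $i\in[p]$; set $\varepsilon\defeq \frac{48(d+1)}{\cmin}\lambda$, so the parameter error is at most $\varepsilon/2$.

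\emph{From parameters to payoffs.} For every $i$, every profile $\vx\in\A$, and every $a\in\A_i$ we have $\uh^i(a,\vx_{\mi})-u^i(a,\vx_{\mi})=\Inner{\vhtheta^i-\vtheta^i}{\vf^i(a,\vx_{\mi})}$, and since $\vf^i(a,\vx_{\mi})$ is a concatenation of one indicator block per group, each of Euclidean norm $1$, the block H\"older inequality $\Abs{\Inner{u}{v}}\le\gNormIII{u}\,\gNormInII{v}$ gives $\Abs{\uh^i(a,\vx_{\mi})-u^i(a,\vx_{\mi})}\le \gNormIII{\vhtheta^i-\vtheta^i}\le \varepsilon/2$. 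Hence every payoff is recovered up to additive error $\varepsilon/2$, and every payoff gap $u^i(a,\vx_{\mi})-u^i(b,\vx_{\mi})$ up to additive error $\varepsilon$, uniformly over players, profiles and actions.

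\emph{Part (i).} The inclusion $\NE(\Gmh)\subseteq\eNE(\Gm)$ is then immediate by chaining: if $x_i\in\argmax_{a}\uh^i(a,\vx_{\mi})$ for every $i$, then for all $i$ and $a$, $u^i(x_i,\vx_{\mi})\ge \uh^i(x_i,\vx_{\mi})-\varepsilon/2\ge \uh^i(a,\vx_{\mi})-\varepsilon/2\ge u^i(a,\vx_{\mi})-\varepsilon$, so $\vx\in\eNE(\Gm)$. The reverse inclusion $\eNE(\Gm)\subseteq\NE(\Gmh)$ is where I expect the main obstacle: naive chaining only places $\eNE(\Gm)$ inside a $2\varepsilon$-Nash set of $\Gmh$, so to obtain exact set equality one must argue the contrapositive and exploit that a violation in $\Gmh$ is a \emph{strict} argmax violation — i.e. some player $i$ and action $a$ with $\uh^i(a,\vx_{\mi})>\uh^i(x_i,\vx_{\mi})$ — and show this forces the best response of player $i$ in the true game to exceed $u^i(x_i,\vx_{\mi})$ by strictly more than $\varepsilon$, hence $\vx\notin\eNE(\Gm)$. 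Closing this factor-of-two gap is the crux; I would attempt it by a case analysis on whether the action witnessing the $\Gmh$-violation is also the one witnessing the near-violation in $\Gm$, using that $\varepsilon$ is tied to the actual parameter error of the recovered game rather than to a looser surrogate.

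\emph{Part (ii).} Given part (i) it suffices to show that the separability hypothesis implies $\eNE(\Gm)=\NE(\Gm)$, since then $\NE(\Gmh)=\eNE(\Gm)=\NE(\Gm)$ (and $\NE(\Gm)\subseteq\eNE(\Gm)$ is automatic). For the reverse containment, if $\vx\in\eNE(\Gm)\setminus\NE(\Gm)$ then some player $i$ has a strictly profitable deviation, so with $b\defeq\argmax_{a}u^i(a,\vx_{\mi})$ we get $0<u^i(b,\vx_{\mi})-u^i(x_i,\vx_{\mi})\le\varepsilon$ by the $\varepsilon$-Nash property; since $(x_i,\vx_{\mi})=\vx\notin\NE(\Gm)$, applying the separability condition to the pair $(b,\vx_{\mi}),(x_i,\vx_{\mi})$ (after checking the required membership of $(b,\vx_{\mi})$) yields $u^i(b,\vx_{\mi})>u^i(x_i,\vx_{\mi})+\varepsilon$, a contradiction. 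This last step is purely game-theoretic and should go through routinely once part (i) is in hand, so the entire difficulty of the theorem is concentrated in the reverse inclusion of part (i) discussed above.
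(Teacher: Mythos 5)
Your setup (uniform payoff-error bound $\Abs{\uh^i(a,\vx_{\mi})-u^i(a,\vx_{\mi})}\le\varepsilon/2$ via block H\"older, the union bound over players, and the forward chaining $\NE(\Gmh)\subseteq\eNE(\Gm)$) coincides with the paper's proof. The genuine gap is exactly the step you flag as the ``crux'' and leave open: the literal inclusion $\eNE(\Gm)\subseteq\NE(\Gmh)$ is not provable from the payoff-error bound, and your proposed contrapositive cannot close it. If $\uh^i(a,\vx_{\mi})>\uh^i(x_i,\vx_{\mi})$, chaining only yields $u^i(a,\vx_{\mi})>u^i(x_i,\vx_{\mi})-\varepsilon$, which is perfectly consistent with $\vx\in\eNE(\Gm)$; concretely, true payoffs $u^i(x_i,\vx_{\mi})=0$, $u^i(a,\vx_{\mi})=\varepsilon$ together with learned payoffs $\uh^i(x_i,\vx_{\mi})=-\varepsilon/2$, $\uh^i(a,\vx_{\mi})=3\varepsilon/2$ respect the error bound, yet $\vx\in\eNE(\Gm)\setminus\NE(\Gmh)$. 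The paper does not prove this inclusion either: its proof of claim (i) consists of the two one-sided statements $\NE(\Gmh)\subseteq\eNE(\Gm)$ and $\NE(\Gm)\subseteq\eNE(\Gmh)$ (the displayed ``$=$'' is shorthand for this pair, as the closing sentence ``the set of profiles in $\NE(\Gmh)$ form an $\varepsilon$-Nash equilibrium set of $\Gm$'' indicates), and the second inclusion is obtained by the same symmetric chaining you already have, not by the unattainable exact reverse inclusion you set out to prove.

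Part (ii) inherits this problem and has an additional flaw of its own. Your route needs the literal equality $\NE(\Gmh)=\eNE(\Gm)$ from part (i), which is not available, and your argument that separability forces $\eNE(\Gm)=\NE(\Gm)$ breaks at the point you yourself parenthesize: the best response $(b,\vx_{\mi})$ of the deviating player need not lie in $\NE(\Gm)$ (another player may profitably deviate from it), so the separability hypothesis --- which only compares a profile in $\NE(\Gm)$ against one outside it with the \emph{same} $\vx_{\mi}$ --- cannot be invoked, and indeed $\eNE(\Gm)\supsetneq\NE(\Gm)$ can happen under the stated separability condition. The paper instead argues directly: for $(x_i,\vx_{\mi})\in\NE(\Gm)$ and $(x_i',\vx_{\mi})\notin\NE(\Gm)$, separability plus the $\varepsilon/2$ payoff-error bound gives $\uh^i(x_i,\vx_{\mi})>\uh^i(x_i',\vx_{\mi})$, from which it concludes that non-equilibrium profiles are excluded from $\NE(\Gmh)$ and equilibrium profiles are retained, yielding $\NE(\Gm)=\NE(\Gmh)$ without ever passing through a statement about $\eNE(\Gm)$ alone.
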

\begin{proof}
Note that $\gNormInII{\vf^i(x_i, \vx_{\mi})} = 
\max \Set{\NormII{\vf^{i,0}(x_i)}, \NormII{\vf^{i,1}(x_i, x_1)}, \ldots, \NormII{\vf^{i,p}(x_i, x_p)}} = 1$,
for any $\vx \in \A$, since each binary vector $\vf^{i,j}(x_i, x_j)$ has a single ``1'' at exactly one location.
Then, from the Cauchy-Schwartz inequality, Lemma \ref{lemma:norm_bound}, and a union bound over all players, we have that:
\begin{align}
(\forall \vx \in \A,\, \forall i \in [p]) ~ \Abs{\uh^i(x_i, \vx_{\mi}) - u^i(x_i, \vx_{\mi})}  
		& = \Abs{(\vhtheta^i - \vtheta^i)^T \vf^i(x_i, \vx_{\mi})} \notag \\ 
		& \leq \gNormIII{\vhtheta^i - \vtheta^i} \gNormInII{\vf^i(x_i, \vx_{\mi})} \notag \\
		& = \gNormIII{\vhtheta^i - \vtheta^i} \leq \frac{24(d_i + 1)}{\cmin} \lambda = \frac{\varepsilon}{2},  \label{eq:payoff_diff}
\end{align}
with probability at least $1 - p\delta$. Now consider any $\vx \in \NE(\Gmh)$ and 
any $i \in [p]$. Since $\vx \in \NE(\Gmh)$, we have from \eqref{eq:payoff_diff}, $(\forall x_i' \in \A_i)$:
\begin{gather*}
u^i(x_i, \vx_{\mi}) + \nicefrac{\varepsilon}{2} \geq \uh^i(x_i, \vx_{\mi}) \geq \uh^i(x_i', \vx_{\mi})  \\
\implies u^i(x_i, \vx_{\mi}) \geq \uh^i(x_i', \vx_{\mi}) - \nicefrac{\varepsilon}{2}   \\
\implies u^i(x_i, \vx_{\mi}) \geq u^i(x'_i, \vx_{\mi}) - \varepsilon , 
\end{gather*}
where the last line again follows from \eqref{eq:payoff_diff}. This proves that $\NE(\Gmh) \subseteq \eNE(\Gm)$.
Using exactly the same arguments as above, we can also show that for any $\vx \in \NE(\Gm)$:
\begin{align*}
\uh^i(x_i, \vx_{\mi}) \geq \uh^i(x_i, \vx_{\mi}) - \varepsilon && (\forall x_i' \in \A_i),
\end{align*}
which proves that $\NE(\Gm) \subseteq \eNE(\Gmh)$. Thus we have that $\NE(\Gmh) = \eNE(\Gm)$, i.e., 
the set of joint strategy profiles
$\vx \in \NE(\Gmh)$ form an $\varepsilon$-Nash equilibrium set of the true game $\Gm$. This proves our first claim.
For our second claim, consider any $(x_i, \vx_{\mi}) \in \NE(\Gm)$ and $(x_i', \vx_{\mi}) \notin \NE(\Gm)$. Then:
\begin{gather*}
u^i(x_i, \vx_{\mi}) > u^i(x_i', \vx_{\mi}) + \varepsilon \\
\implies \uh^i(x_i, \vx_{\mi}) + \nicefrac{\varepsilon}{2} > \uh^i(x_i', \vx_{\mi}) - \nicefrac{\varepsilon}{2} + \varepsilon \\
\implies \uh^i(x_i, \vx_{\mi}) > \uh^i(x_i', \vx_{\mi}) \\
\implies (x_i, \vx_{\mi}) \in \NE(\Gmh) \Land (x_i', \vx_{\mi}) \notin \NE(\Gmh),
\end{gather*}
where the first line holds by assumption, and the
second line again follows from \eqref{eq:payoff_diff}. Thus we have that $\NE(\Gm) = \NE(\Gmh)$.
By setting the probability of error $p \delta = \delta'$ for some $\delta' \in (0, 1)$ we prove our claim.
The second part of the lower bound on the number of samples is due to Lemma \ref{lemma:min_eigenvalue_sample}.
\end{proof}
\begin{remark}
The sufficient number of samples needed by our method to guarantee PSNE recovery, with probability at
least $1 - \delta$,  scales as $\BigO{m^4 d^4 \log (\nicefrac{pd}{\delta})}$. This should be compared
with the results of \cite{jalali_learning_2011} for learning undirected graphical models. They show that
$\BigO{m^2 d^2 \log (m^2 p)}$ are sufficient for learning $m$-ary discrete graphical models. However,
their sample complexity hides a constant $K$ that is related to the maximum eigenvalue of the scatter
matrix, which we have upper bounded by $m^2 d^2$ in our case, leading to a slightly higher sample complexity.
\end{remark}

\begin{remark} Note that as $n \rightarrow \infty$, the regularization parameter $\lambda \rightarrow 2\nu$,
where $\nu$ is the maximum norm of the expected gradient at the true parameter $\vtheta^i$ across all $i \in [p]$.
Thus, even with an infinite number of samples, our method recovers the $\varepsilon$-Nash equlibria set of
the true game with $\varepsilon \rightarrow \frac{96(d_i + 1) \nu}{\cmin}$ as $n \rightarrow \infty$.
\end{remark}

\section{Necessary Conditions}
In this section, we obtain an information-theoretic lower bound on the
number of samples needed to learn sparse polymatrix games. Let $\sGm_{p,d,m}$
be set of polymatrix games over $p$ players, with degree at most $d$, and maximum
number of strategies per player being $m$. Our approach for doing so is to
treat the inference procedure as a communication channel, where nature picks
a game $\Gm^*$ from the set $\sGm_{p,d,m}$ and then generates a data set $\Data$ of $n$ strategy profiles.
A decoder $\psi: \A^n \rightarrow \sGm_{p,d,m}$ then maps $\Data$ to a game $\Gmh \in \sGm_{p,d,m}$.
We wish to obtain lower bounds on the number of samples required by any decoder $\psi$ to recover the true
game consistently. In this setting, we define the \emph{minimax} estimation error as follows:
\begin{align*}
\perr = \min_{\psi} \sup_{\Gm^* \in \sGm_{p,d,m}} \Prob{\NE(\psi(\Data)) \neq \NE(\Gm^*)},
\end{align*}
where the probability is computed over the data distribution. For obtaining necessary conditions
on the sample complexity, we assume that the data distribution follows
the global noise model described in \eqref{eq:global_noise}.
The following theorem prescribes the number of samples needed for learning sparse polymatrix games. 
Our proof of the theorem constitutes constructing restricted ensembles of ``hard-to-learn'' polymatrix games,
from which nature picks a game uniformly at random and generates data. We then use the Fano's technique 
to lower bound the minimax error. The use of restricted ensembles is customary for obtaining information-theoretic lower bounds,
c.f. \citep{santhanam2012information, Wang2010}.
\begin{figure*}[t]
\begin{center}
\fbox{
\begin{minipage}{0.97\linewidth}
\includegraphics[width=\linewidth]{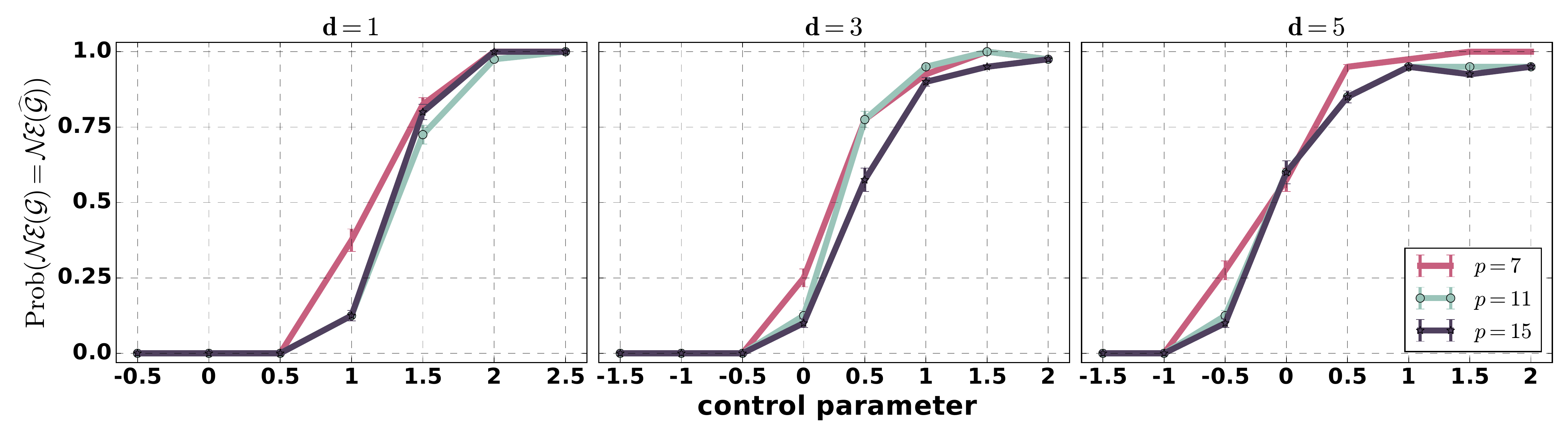}
\caption{Estimated probability of exact recovery of the PSNE set computed across
40 randomly sampled polymatrix games with the number of samples set to $n = 10^c (d+1)^2 \log (\nicefrac{2 p (d+1)}{\delta})$,
where $c$ is the control parameter shown in the x-axis, and $\delta = 0.01$. \label{fig:synthetic_res}}
\end{minipage}
}
\end{center}
\end{figure*}
\begin{theorem}
If the number of samples $n \leq \frac{\log (m^d - m){p \choose d}}{2 \log 2} - 1$,
then estimation fails with $\perr \geq \nicefrac{1}{2}$.
\end{theorem}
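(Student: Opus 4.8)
The plan is to use Fano's inequality over a carefully constructed finite ensemble of polymatrix games. First I would construct a restricted ensemble $\mathcal{C} \subseteq \sGm_{p,d,m}$ whose members all have pairwise distinct PSNE sets; the natural choice is to fix a single ``base'' player, say player $1$, let it have $d$ neighbors chosen from the remaining $p-1$ players, and for each choice of neighborhood build a game whose PSNE set is some specific subset of $\A$ determined by that neighborhood. A clean way to do this is to put payoffs on the edges so that player $1$'s best response is essentially a fixed function of its chosen neighbors' strategies, while the other players' payoffs are arranged so that each of them has a unique dominant strategy; then the PSNE set of the resulting game is determined by which $d$-subset of $[p]\setminus\Set{1}$ is the neighborhood and by the values of $u^{1,1}$, giving on the order of $\binom{p}{d}$ distinct games, and within each neighborhood one can further vary the individual payoff $u^{1,1}$ so as to pick out one of roughly $m^d - m$ distinct PSNE sets (the ``$-m$'' accounting for degenerate configurations in which player $1$'s best response is constant and the neighborhood becomes unidentifiable). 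This yields an ensemble of size $M = (m^d - m)\binom{p}{d}$, with all PSNE sets distinct, so that exact PSNE recovery is equivalent to identifying the index in the ensemble.

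Next I would set up the standard reduction: nature draws an index $J$ uniformly from $[M]$, generates $\Data = \Set{\vx^{(l)}}_{l\in[n]}$ i.i.d.\ from the global noise distribution $\Pf_g(\parg; \Gm_J)$ of \eqref{eq:global_noise}, and the decoder $\psi$ outputs $\what{J} = \psi(\Data)$. By Fano's inequality, if the games are constructed so that every member of the ensemble has the same number of Nash equilibria $\Abs{\NE(\Gm_J)} = k$ and the same global-noise parameter $q$, then $\perr \geq 1 - \frac{n \cdot I_{\max} + \log 2}{\log M}$, where $I_{\max}$ bounds the mutual information contribution per sample. The key quantity to control is the KL divergence $\KL{\Pf_g(\parg;\Gm_j)}{\Pf_g(\parg;\Gm_{j'})}$ between any two members; because both distributions are mixtures of two uniform distributions with the same weights $q$ and $1-q$ over the same ambient space $\A$, differing only in which $k$-element subset of $\A$ gets the high-probability mass, this KL divergence is bounded by an absolute constant (in fact by something like $2q\log\frac{q/k}{(1-q)/(\Abs{\A}-k)}$, which is $O(1)$ once $q$ is bounded away from the degenerate regime). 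A slightly cleaner route is to avoid computing pairwise KL divergences individually and instead bound $I(J;\Data) \leq \max_{j,j'}\KL{\Pf_g(\parg;\Gm_j)}{\Pf_g(\parg;\Gm_{j'})}\cdot n$, or to use the fact that each sample is one symbol from an alphabet of size $\Abs{\A}$, so $I(J;\vx^{(l)}) \leq \log 2$ if we are careful about how much information a single strategy profile reveals about the index --- this is essentially where the $\log 2$ in the denominator of the theorem's bound originates. Plugging $M = (m^d-m)\binom{p}{d}$ and $I(J;\Data) \leq n\log 2$ into Fano gives $\perr \geq 1 - \frac{n\log 2 + \log 2}{\log M}$, and requiring the right-hand side to be at least $1/2$ yields exactly $n \leq \frac{\log\left((m^d-m)\binom{p}{d}\right)}{2\log 2} - 1$.

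The main obstacle, and the step requiring the most care, is the ensemble construction itself: one must exhibit, for each admissible neighborhood $\Nb_1$ of size $d$ and each target best-response function for player $1$, actual payoff matrices $u^{1,j}$ and $u^{1,1}$ (and dominant-strategy payoffs for the other players) that are finite, have the prescribed sparsity pattern (so the game genuinely lies in $\sGm_{p,d,m}$), and produce the claimed PSNE set --- and crucially, one must verify that distinct ensemble members have genuinely distinct PSNE sets, which is what forces the ``$m^d - m$'' count rather than $m^d$ (the constant best-response functions are excluded because they would make the neighborhood unidentifiable and collapse distinct graphs to the same PSNE set). The second delicate point is pinning down the per-sample information bound to the constant $\log 2$ rather than a larger $\Gm$-dependent constant; I would handle this by insisting all ensemble games share the same $q$ and the same $k = \Abs{\NE(\Gm_j)}$, and then bounding the mutual information by a data-processing / symmetry argument on the mixture-of-uniforms structure so that the only information a sample carries about $J$ is whether it landed in the (common-size) equilibrium set and, if so, which element --- an argument that, after the dust settles, gives the clean $n\log 2$ bound. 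Everything else (the Fano inequality itself, rearranging the final inequality) is routine.
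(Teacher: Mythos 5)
Your overall skeleton is the same as the paper's: a restricted ensemble of roughly $(m^d-m){p \choose d}$ games with distinct singleton PSNE sets, Fano's inequality with a pairwise-KL bound on the mutual information, and a per-sample information bound of $\log 2$. However, there are two genuine gaps. The more serious one is the justification of the $\log 2$ per-sample bound, which is where the $2\log 2$ in the denominator actually comes from. For two global-noise mixtures with disjoint singleton equilibrium sets, the per-sample KL divergence is $\frac{qm^p-1}{m^p-1}\log\frac{q(m^p-1)}{1-q}$, which is of order $p\log m$ for any constant $q$ bounded away from the noise floor; it is \emph{not} $O(1)$ ``once $q$ is bounded away from the degenerate regime'' --- the dependence is exactly the opposite. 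The paper gets $\KL{\Pf_{\Data|\Gm_1}}{\Pf_{\Data|\Gm_2}} \leq n\log 2$ only by letting nature choose the observation parameter adversarially close to uniform, $q \in (\nicefrac{1}{m^p},\, \nicefrac{2}{(m^p+1)}]$, so that $\frac{q(m^p-1)}{1-q}\leq 2$. Your alternative argument that ``each sample is one symbol from an alphabet of size $\Abs{\A}$, so $\MI(J;\vx^{(l)})\leq \log 2$'' is also not valid: the alphabet-size bound only gives $\MI \leq p\log m$, and a sample landing in the (game-dependent) equilibrium set can in principle reveal up to $\log M$ bits. Without the explicit choice of $q$ near the noise floor, your Fano computation does not yield the claimed threshold, and for fixed constant $q$ it yields no useful lower bound at all.

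The second gap is the ensemble construction, which you yourself flag as the crux but do not carry out, and whose sketch as written does not produce the required count. Varying only the hub player's individual payoff $u^{1,1}$ gives at most $m$ distinct best responses, hence at most order $m{p \choose d}$ distinct PSNE sets, not $(m^d-m){p \choose d}$; the $m^d$ factor must come from varying the prescribed strategies of the $d$ neighbors themselves. The paper does this by making the $d$ chosen players ``influential'' with individual payoffs $u^{i,i}(x_i)=\Ind{x_i=a_i}$ forcing a non-constant profile $\va\in[m]^d$, connecting them to \emph{all} remaining players via $u^{j,i}(x_j,x_i)=\Ind{x_j=x_i}$, and giving the followers a tie-breaking individual payoff so that the unique equilibrium is ($\va$ on the influential set, $\Maj(\va)$ elsewhere); the $-m$ term removes the constant $\va$'s, for which the equilibrium no longer identifies the influential set. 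In your hub-based variant, distinctness of PSNE sets across different neighborhoods and different neighbor-strategy assignments is exactly the delicate point (the equilibrium profile must encode \emph{which} $d$-subset was chosen), and this needs an explicit payoff construction and verification rather than a placeholder.
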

\begin{proof}
Consider the following restricted ensemble $\stGm \subset \sGm_{p,d,m}$ of $p$-player polymatrix games  
with degree $d$, and the set of pure-strategies of each player being $\A_i = [m]$.
Each $\Gm = (G, \U) \in \stGm_{p,d,m}$ is characterized by a set $\cI$ of \emph{influential players}, and
a set $\cI^c \defeq [p] \setminus \cI$ of \emph{non-influential} players, with $\Abs{\cI} = d$.
The graph $G$ is a complete (directed) bipartite graph from the set $\cI$ to $\cI^c$.
After picking the graph structure $G$, nature fixes the strategies of the influential players to some
$\va \in \Set{\vb \in [m]^{\Abs{\cI}} \mid \exists i,j \in \cI \text{ such that } b_i \neq b_j}$.
Finally, the payoff matrices are chosen as follows:
\begin{align*}
u^{i,i}(x_i) &= \Ind{x_i = a_i} && (\forall i \in \cI) \\
u^{j,j}(x_j) &= \frac{1}{(2x_j)} && (\forall j \in \cI^c) \\
u^{j,i}(x_j, x_i) &= \Ind{x_j = x_i} && (\forall i \in \cI \Land j \in \cI^c).
\end{align*}
Therefore, each $\Gm \in \stGm$ game has a exactly one unique Nash equilibrium where the
influential players play $\va$ (decided by nature) and the non-influential players play $\Maj(\va)$ ---
where $\Maj(\va)$ returns the majority strategy among $\va$, and in case of a tie between two or more 
strategies it returns the numerically lowest strategy (recall that the pure-strategy set for each player is $[m]$).
Thus we have that $\Abs{\stGm} = (m^d - m){p \choose d}$. 
Nature picks a game $\Gm$  uniformly at random from $\stGm$ by randomly selecting a set of $d$ players as ``influential'',
and then selecting a strategy profile $\va$ uniformly at random for the influential players and setting the payoff matrices
as described earlier. Nature then generates a dataset $\Data$
using the global noise model with parameter $q \in (\nicefrac{1}{m^p}, \nicefrac{2}{(m^p + 1)} ]$. 
Then from the Fano's inequality we have that:
\begin{align}
\perr \geq 1 -  \frac{\MI(\Data; \Gm) + \log 2}{H(\Gm)}, \label{eq:fano}
\end{align}
where $\MI(\parg; \parg)$ and $H(\parg)$ denote mutual information and entropy respectively.
The mutual information $\MI(\Data; \Gm)$ can be bounded, using a result by \cite{Yu97},
 as follows:
\begin{align}
\MI(\Data; \Gm) \leq \frac{1}{\Abs{\stGm}^2} \sum_{\Gm_1 \in \stGm} \sum_{\Gm_2 \in \stGm} \KL{\Pf_{\Data | \Gm = \Gm_1}}{\Pf_{\Data | \Gm = \Gm_2}},
	\label{eq:mi_bound}
\end{align}
where $\Pf_{\Data | \Gm = \Gm_1}$ (respectively $\Pf_{\Data | \Gm = \Gm_2}$) denotes the
data distribution under $\Gm_1$ (respectively $\Gm_2$). The KL divergence term from \ref{eq:mi_bound}
can be bounded as follows:
\begin{align}
\KL{\Pf_{\Data | \Gm = \Gm_1}}{\Pf_{\Data | \Gm = \Gm_2}} 
	&= n \sum_{\vx \in \A} \Pf_{\Data | \Gm = \Gm_1} \log \frac{\Pf_{\Data | \Gm = \Gm_1}}{\Pf_{\Data | \Gm = \Gm_2}} \notag \\
	&= n \Biggl\{\sum_{\vx \in \NE(\Gm_1)} q \log \frac{q (m^p - 1)}{1 - q} + 
		 \sum_{\vx \in \NE(\Gm_2)}  \frac{(1 - q)}{m^p - 1} \log \frac{1 - q}{q(m^p - 1)} \Biggr\} \notag \\
	&= \frac{n (qm^p -1)}{m^p -1} \log \left(\frac{q(m^p - 1)}{1 - q}	\right) \notag \\
	&\leq n \log \left(\frac{q(m^p - 1)}{1 - q}	\right) \leq n \log 2, \label{eq:kl_bound}
\end{align}
where the first line follows from the fact that the samples are \iid, the second line follows from
the fact the the distributions $\Pf_{\Data | \Gm = \Gm_1}$ and $\Pf_{\Data | \Gm = \Gm_2}$ assign
the same probability to $\vx \in \A \setminus (\NE(\Gm_1) \union \NE(\Gm_2))$, and the
last line follows from the fact that $q \in (\nicefrac{1}{m^p}, \nicefrac{2}{(m^p + 1)} ]$.
Putting together \eqref{eq:fano}, \eqref{eq:mi_bound} and \eqref{eq:kl_bound}, we have that
if
\begin{align*}
n \leq \frac{\log (m^d - m){p \choose d}}{2 \log 2} - 1,
\end{align*}
then $\perr \geq \nicefrac{1}{2}$. Since, learning the ensemble $\sGm$ is at least as hard 
as learning a subset of $\sGm$, our claim follows.
\end{proof}
\begin{remark}
From the above theorem we have that, the number of samples needed by any conceivable 
method, to recover the PSNE set consistently, is $\BigOm{d \log (pm)}$, assuming that $d = o(p)$.
Therefore, the method based on $\ell_{1,2}$-regularized logistic regression is information-theoretically 
optimal in the number of players, for learning sparse polymatrix games.
\end{remark}

\section{Experiments}
In order to validate our theoretical results, we performed various synthetic experiments by sampling a random polymatrix
game, generating data from the sampled game, and then using our method to learn the game from the sampled data. We estimated
the probability that our method learns the ``correct'' game, i.e., a game with the same PSNE set as the true game, across
40 randomly sampled games for each value of $p \in \Set{7, 11, 15}$ and $d \in \Set{1, 3, 5}$. 
The results are shown in Figure \ref{fig:synthetic_res}. We observe that the scaling of 
the sample complexity prescribed by Theorem \ref{thm:suff} indeed holds in practice. The results
show a phase transition behavior, where if the number of samples is less than
$c (d + 1)^2 \log (\nicefrac{p(d + 1)}{\delta})$, for some constant $c$, then PSNE recovery fails with high probability,
while if the number of samples is at least $C (d + 1)^2 \log (\nicefrac{p(d + 1)}{\delta})$, for some constant $C$,
then PSNE recovery succeeds with high probability. More details about our 
synthetic experiments can be found in Appendix \ref{sec:experiments}.

We also evaluated our algorithm on real-world data sets containing (i) U.S. supreme court rulings, 
(ii) U.S. congressional voting records, and (iii) U.N. General Assembly roll-call votes.  

Our algorithm recovers connected components 
corresponding to liberal and conservative blocs of justices within the Supreme Court of the U.S. The Nash equilibria 
consists of strategy profiles where all justices vote unanimously, as well as strategy profiles where the conservative and liberal
blocs vote unanimously but in opposition to each other. 

The game graph recovered from congressional voting records, 
groups Democrats and Republicans in separate components. Moreover, we observed that the connected components groups
senators belonging to the same state or geographic region together. The recovered PSNE set sheds light on the voting patterns
of senators --- senators belonging to the same connected component vote (almost) identically on bills. 

Finally, on the U.N. voting
data set our method recovered connected components comprising of Arab League countries and Southeast Asian countries respectively.
As was the case with the aforementioned data sets, the PSNE set grouped countries that vote almost identically on U.N. resolutions.

We were also able to compute the price of anarchy (PoA) for each data set, which quantifies the degradation of performance
caused by selfish behavior of non-cooperative agents. For the two supreme court voting data sets, the 
PoA was 1.9 and 1.6 respectively. For the congressional voting data set the PoA was 2.6, while for the united nations voting
data set the PoA was 3.0. More details and results from our real-world experiments can be found in Appendix \ref{sec:real_world}.

\paragraph{Concluding Remarks.}
We conclude this exposition with a discussion of potential avenues for future work. In this
paper we considered the problem of learning a very general, and widely used, class of graphical
games called polymatrix games, involving players with pure strategies. One can also consider 
\emph{mixed strategies}, which would entail learning distributions, instead of ``sets'', under
the framework of non-cooperative maximization of utility. Further, one can also consider
other solution concepts like \emph{correlated equilibria}.

\bibliographystyle{apalike}
\bibliography{paper}

\clearpage
\onecolumn
\begin{appendices}
\begin{center}
\textbf{Learning Sparse Polymatrix Games in Polynomial Time and Sample Complexity}
\end{center}

\section{Detailed Proofs}
\label{app:proofs}
\begin{proof}[Proof of Lemma \ref{lemma:min_eigenvalue} (Minimum eigenvalue of population Hessian)]
~\\
Fix any $\vtheta^0, \vtheta^1 \in \Theta^i$, with $\vtheta^1 \neq \vect{0}$.
For any $t \in (-\infty, \infty)$, 
let $F(t; x_i) \defeq (\vtheta^0 + t \vtheta^1)^T \vf(x_i, \vx_{\mi})$.
Then for $\vx \in \A$, 
\begin{align}
\loss(\vx; \vtheta^0 + t \vtheta^1) = - F(t; x_i) + \log(\sum_{a \in \A_i} \exp(F(t; a))). \label{eq:loss_line}
\end{align}
A little calculation shows that the double derivative of $\loss(\vx; \vtheta^0 + t \vtheta^1)$
with respect to $t$ is as follows:
\begin{align}
\ppdv{\loss(\vx; \vtheta^0 + t \vtheta^1)}{t} &= \sum_{\mathclap{a \in \A_i}} \sigma(t; a) F'(a)^2 
		- \Bigl(\sum_{\mathclap{a \in \A_i}} \sigma(t; a) F'(a)\Bigr)^2, \label{eq:ddl_derivative_t} \\
\sigma(t; b) &= \frac{\exp(F(t; b))}{\sum_{a \in \A_i} \exp(F(t; a))}, ~ (b \in \A_i)	\notag
\end{align}
where $F'(a)$ is the derivative of $F(t; a)$ with respect to $t$. Since $F(t; a)$ is a linear function of $t$, $F'(a)$
is not a function of $t$. Also note that $\sum_{a \in \A_i} \sigma(t; a) = 1$. Since $\vtheta^0, \vtheta^1$ have
bounded norm and $t \in (-\infty, \infty)$, we have that $\sigma(t; a) > 0, \forall a \in \A_i$. 
Therefore, from \eqref{eq:ddl_derivative_t}, the strict convexity of $(\parg)^2$ and Jensen's inequality, we have:
\begin{align*}
\ppdv{\loss(\vx; \vtheta^0 + t \vtheta^1)}{t} > 0 && (\forall t \in (-\infty, \infty)).
\end{align*}
Thus we have that $\loss(\vx, \vtheta)$ is strongly convex, i.e., $\eigmin(\mH^i(\vx; \vtheta)) > 0$,
$\forall \vtheta \in \Theta^i$. Finally, by concavity of $\eigmin(\parg)$ \citep{boyd2004convex}
and the Jensen's inequality we have:
\begin{align*}
\eigmin(\mH^i(\vtheta^i)) = \eigmin(\Exp{\vx}{\mH^i(\vx; \vtheta^i)}) \geq 
\Exp{\vx}{\eigmin(\mH^i(\vx; \vtheta^i))} > 0.
\end{align*}
\end{proof}
\begin{proof}[Proof of Lemma \ref{lemma:min_eigenvalue_sample} (Minimum eigenvalue of finite sample Hessian)]
~\\
To simply notation in the proof we will denote $S_i$ by $S$.
The $(j,k)$ block of $\mH(\Data; \vtheta_S)$, where $j, k \in \Set{0} \union \Nb_i$, can be written as:
\begin{align*}
&\mH_{j,k}(\Data; \vtheta_S) = \underbrace{\sum_{l=1}^n \sum_{\mathclap{a \in \A_i}} \sigma^i(a, \vx^{(l)}_{\mi}; \vtheta_S)
		 \vf^{i,j}(a, x^{(l)}_j) (\vf^{i,k}(a, x^{(l)}_k))^T}_{\mB_{j,k}(\Data; \vtheta_S)}
  - \underbrace{\sum_{l=1}^n \sum_{\;\;\;\mathclap{a, b \in \A_i}} \sigma^i(a, \vx^{(l)}_{\mi}; \vtheta_S) 
 		 \vf^{i,j}(a, x^{(l)}_j) \vf^{i,k}(b, x^{(l)}_k)^T}_{\mR_{j,k}(\Data; \vtheta_S)},
\end{align*}	 
where the matrices $\mB$ and $\mR$ have been defined above (blockwise). Since the matrix $\mR$
is positive semi-definite $\eigmax(\mH(\Data; \vtheta_S)) \leq \eigmax(\mB(\Data; \vtheta_S))$.
Further, since $\mB$ is positive semi-definite, we have, from Lemma \ref{lemma:max_eigenvalue_block_psd}:
\begin{align*}
\eigmax(\mB(\Data; \vtheta_S)) &\leq \sum_{j \in \Set{0} \union \Nb_i} \eigmax(\mB_{j,j}(\Data; \vtheta_S)) \\
	&\leq (d_i + 1) \max_{j \in \Set{0} \union \Nb_i} \eigmax\Bigl( \frac{1}{n}\sum_{l=1}^n \sum_{\mathclap{a \in \A_i}} 
		\sigma^i(a, \vx^{(l)}_{\mi}; \vtheta_S) \vf^{i,j}(a, x^{(l)}_j) (\vf^{i,j}(a, x^{(l)}_j))^T \Bigr) \\
	&\leq \frac{(d_i + 1)}{n} \max_{j \in \Set{0} \union \Nb_i} \sum_{l=1}^n \sum_{\mathclap{a \in \A_i}} 
		\sigma^i(a, \vx^{(l)}_{\mi}; \vtheta_S) \eigmax\left( \vf^{i,j}(a, x^{(l)}_j) (\vf^{i,j}(a, x^{(l)}_j))^T \right) \\
	&= d_i + 1.
\end{align*}
Thus we have that $\eigmax(\mH(\Data; \vtheta_S)) \leq \eigmax(\mB(\Data; \vtheta_S)) \leq d_i + 1 \defeq R$.
Also note that $\mH(\Data; \vtheta_S) \in \R^{\Abs{S} \times \Abs{S}}$, with $\Abs{S} \leq m_i (1 + d_i m)$.
Then using the matrix Chernoff bounds by \cite{Tropp2011}, we have:
\begin{align*}
\Prob{\eigmin(\mH(\Data; \vtheta_S)) \leq (1-\delta)\eigmin} \leq 
	\Abs{S} \left(\frac{\exp(-\delta)}{(1-\delta)^{(1 - \delta)}}\right)^{(\nicefrac{n \eigmin}{R})} 
\end{align*}
Setting $\delta = \nicefrac{1}{2}$ we get:
\begin{align*}
\Prob{\eigmin(\mH(\Data; \vtheta_S)) \geq \frac{\eigmin}{2}} \geq 1 - m_i (1 + d_i m) \exp\left(-\frac{n\eigmin}{8(d_i + 1)}\right)
\end{align*}
Controlling the probability of error to be at most $\delta$ we obtain the lower bound on the number of samples.
\end{proof}
\begin{proof}[Proof of Lemma \ref{lemma:gradient_bound} (Gradient bound)]
~\\
A simple calculation shows that 
\begin{align}
\pdv{\loss^i(\vx; \vtheta^i)}{\vtheta^{i,j}} = - \vf^{i,j}(x_i, x_j) + \sum_{a \in \A_i} \sigma^i(a, \vx_{\mi}; \vtheta^i) \vf^{i,j}(a, x_j), 
\end{align}
where $\sigma^i(\parg)$ has been defined in \eqref{eq:sigma_i}.
Let $\vg(\vx^{(1)}, \ldots, \vx^{(n)}) = (g_j(\vx^{(1)}, \ldots, \vx^{(n)}))_{j \in \Set{0} \union \Nb_i}$, where 
$g_j(\parg) = \NormII{\frac{1}{n} \sum_{l=1}^n \pdv{\loss^i(\vx^{(l)}; \vtheta^i)}{\vtheta^{i,j}}}$. 
Then $\NormInfty{\vg(\parg)} = \Norm{\Grad L^i(\Data; \vtheta^i)}_{\infty, 2}$ and
$\NormInfty{\Exp{\vx}{\vg(\parg)}} = \Norm{\Exp{\vx}{\Grad \loss^i(\vx; \vtheta^i)}}_{\infty, 2} = \nu$.
Then, for any $\vx^{(l)} \neq \vx^{(l)'}$ we have that:
\begin{align*}
&\Abs{g_j(\vx^{(1)}, \ldots, \vx^{(l)}, \ldots, \vx^{(n)}) - g_j(\vx^{(1)}, \ldots, \vx^{(l)'}, \ldots, g_j(\vx^{(n)})}  \\
& = \frac{1}{n} \aNormII{\vf^{i,j}(x^{(l)'}_i , x^{(l)'}_j) -\vf^{i,j}(x^{(l)}_i, x^{(l)}_j) 
					+ \sum_{\mcp{a \in \A_i}} \sigma^i(a, \vx^{(l)}_{\mi}; \vtheta^i) \vf^{i,j}(a, x^{(l)}_j) 
		             -  \sigma^i(a, \vx^{(l)'}_{\mi}; \vtheta^i) \vf^{i,j}(a, x^{(l)'}_j) } \\
	&\leq \frac{1}{n} \Bigl(2 + \sum_{a \in \A_i} (\sigma^i(a, \vx^{(l)}_{\mi}; \vtheta^i))^2 
			+ (\sigma^i(a, \vx^{(l)'}_{\mi}; \vtheta^i))^2 \Bigr)^{\nicefrac{1}{2}}
	 \leq \frac{1}{n} (2 + 2)^{\nicefrac{1}{2}} = \nicefrac{2}{n},
\end{align*}
where in the last line we used the fact that $\sum_{a} \sigma^i(a, \parg) = 1$ along with the Cauchy-Schwartz inequality.
Then using the McDiarmid's inequality we have:
\begin{align*}
\Prob{\Abs{g_j(\parg) - \Exp{\vx}{g_j(\parg)} } \leq t} \geq 1 - 2 \exp\Bigl( \frac{-nt^2}{2} \Bigr).
\end{align*}
Then using a union bound over all $j$ we have:
\begin{align*}
\Prob{\max_{j} \Abs{g_j(\parg) - \Exp{\vx}{g_j(\parg)} } \leq t} &\geq 1 - 2 (d_i + 1) \exp\Bigl( \frac{-nt^2}{2} \Bigr) \\
\implies  \Prob{\NormInfty{\vg(\parg) - \Exp{\vx}{\vg(\parg)}} \leq t} &\geq 1 - 2 (d_i + 1) \exp\Bigl( \frac{-nt^2}{2} \Bigr) \\
\implies \Prob{\NormInfty{\vg(\parg)} - \NormInfty{\Exp{\vx}{\vg(\parg)}} \leq t} &\geq 1 - 2 (d_i + 1) \exp\Bigl( \frac{-nt^2}{2} \Bigr) \\
\implies \Prob{\NormInfty{\vg(\parg)} \leq \nu + t} &\geq 1 - 2 (d_i + 1) \exp\Bigl( \frac{-nt^2}{2} \Bigr),
\end{align*}
where in the third line we used the reverse triangle inequality.
Setting the probability of error to be $\delta$ and solving for $t$, we prove our claim.
\end{proof}
\begin{proof}[Proof of Lemma \ref{lemma:population_eigenvalue_arbit} (Minimum population eigenvalue at arbitrary parameter)]
~\\
To simply notation in the proof we will denote $S_i$ by $S$.
The population Hessian matrix at $\mH(\vtheta_S)$ can also be written as $\mH(\vtheta^i_S + \DelS)$,
where $\DelS = \vtheta_S - \vtheta^i_S$. Using the variational characterization of the 
minimum eigenvalue of $\mH(\vtheta^i_S + \DelS)$ and the Taylor's theorem, we have:
\begin{align}
\eigmin(\mH(\vtheta^i_S + \DelS)) &= \min_{\Set{\vy \in \R^{\Abs{S}} | \NormII{\vy} = 1}}
	\sum_{i,j \in S} y_i \{ H_{i,j}(\vtheta^i_S) + (\Grad H_{i,j}(\vbtheta_S))^T \DelS \} y_j \notag \\
	&\geq \eigmin(\mH(\vtheta^i_S)) - \max_{\Set{\vy \in \R^{\Abs{S}} | \NormII{\vy} = 1}}
		\sum_{i,j \in S} y_i \{(\Grad H_{i,j}(\vbtheta_S))^T \DelS \} y_j \notag \\
	&\geq \eigmin(\mH(\vtheta^i_S)) - \max_{\Set{\vy \in \R^{\Abs{S}} | \NormII{\vy} = 1}} 
		\sum_{i,j \in S} y_i \{ \Abs{(\Grad H_{i,j}(\vbtheta_S))^T \DelS} \} y_j,	\label{eq:eigmin_arbit}	
\end{align} 
where $\vbtheta = t \vtheta^i_S + (1 - t) \vtheta_S$ for some $t \in [0,1]$, and
the third line follows from the monotonicity property of the spectral norm $\InNormII{\parg}$ \citep{johnson_monotonicity_1991}.
For any vector $\vtheta \in \Theta^i$, let $\mA(\vtheta_S) = (A_{i,j}(\vtheta_S))$, where
$A_{i,j}(\vtheta_S) =  \Abs{(\Grad H_{i,j}(\vtheta_S))^T \DelS}$.  Then,
\begin{align}
\InNormII{\mA(\vtheta_S)} = \InNormII{\Exp{\vx}{\mA(\vx; \vtheta_S)}} \leq \max_{\vx \in \A} \InNormII{\mA(\vx; \vtheta_S)}. \label{eq:matA}
\end{align}
Now consider the $(j,k)$ block of $\mA(\vx; \vtheta_S)$ for any $\vx \in \A$,
where $j, k \in \Set{0} \union \Nb_i$. Then, from \eqref{eq:hessian_block} we have that:
\begin{align*}
&\mA_{j,k}(\vx; \vtheta) = \underbrace{\sum_{a \in \A_i} \Abs{(\Grad \sigma^i(a, \vx_{\mi}; \vtheta))^T \DelS}
		 \vf^{i,j}(a, x_j) (\vf^{i,k}(a, x_k))^T}_{\mB_{j,k}(\vx; \vtheta)} \\
 &~ - \underbrace{\sum_{a, b \in \A_i} \Abs{\bigl\{\sigma^i(b, \vx_{\mi}; \vtheta) \Grad \sigma^i(a, \vx_{\mi}; \vtheta)   + 
 	\sigma^i(a, \vx_{\mi}; \vtheta) \Grad \sigma^i(b, \vx_{\mi}; \vtheta) \bigr\}^T\DelS} 
 		 \vf^{i,j}(a, x_j) \vf^{i,k}(a, x_k)^T}_{\mR_{j,k}(\vx; \vtheta)}.  
\end{align*}	 
Thus, $\mA(\vx; \vtheta) = \mB(\vx; \vtheta) - \mR(\vx; \vtheta)$, where the matrices $\mB$ and $\mR$ have been defined above (block-wise).
Observe that the matrix $\mR$ is positive semi-definite. Therefore, $\InNormII{\mA(\vx; \vtheta)} \leq \InNormII{\mB(\vx; \vtheta)}$.
Finally, since $\mB$ is positive semi-definite, the spectral norm of $\mB$ is at most the sum of the
spectral norms of the diagonal blocks (c.f. Lemma \ref{lemma:max_eigenvalue_block_psd}). Therefore, we have
\begin{align}
\InNormII{\mB(\vx; \vtheta)} \leq \sum_{j \in \Set{0} \union \Nb_i} \InNormII{\mB_{j,j}(\vx; \vtheta)}
\leq (d_i + 1) \Bigl( \max_{j \in \Set{0} \union \Nb_i} \InNormII{\mB_{j,j}(\vx; \vtheta)} \Bigr).\label{eq:matA1}
\end{align}
A little calculation shows that 
\begin{align*}
\pdv{\sigma^i(a, \vx_{\mi}; \vtheta)}{\vtheta_j} = \sigma^i(a, \vx_{\mi}; \vtheta)
\Bigl \{\vf^{i,j}(a, x_j) - \sum_{a' \in \A_i} \sigma^i(a', \vx_{\mi}; \vtheta) \vf^{i,j}(a', x_j) \Bigr\},
\end{align*}
and $\NormInfty{\ipdv{\sigma^i(a, \vx_{\mi}; \vtheta)}{\vtheta_j}} \leq \nicefrac{1}{4}$. 
Further, since for any given $a \in \A_i$, at most $m_j + 1$ elements of the partial derivative vector above 
is non-zero, we have $\NormII{\ipdv{\sigma^i(a, \vx_{\mi}; \vtheta)}{\vtheta_j}} \leq \nicefrac{(m_j + 1)}{4}$
and $\Norm{\Grad \sigma^i(a,\vx_{\mi}; \vtheta)}_{\infty, 2} \leq \nicefrac{(m_j + 1)}{4} \leq \nicefrac{(m + 1)}{4}$.
Then using the Cauchy-Schwartz inequality and the monotonicity property of spectral norm \citep{johnson_monotonicity_1991}
 we have:
\begin{align}
\InNormII{\mB_{j,j}(\vx; \vtheta)} &\leq 
	\InNormII{\sum_{a \in \A_i} \Norm{\Grad \sigma^i(a, \vx_{\mi}; \vtheta))}_{\infty, 2} \Norm{\DelS}_{1,2}
		 \vf^{i,j}(a, x_j) (\vf^{i,j}(a, x_j))^T} \notag \\
	&\leq \frac{1}{4} m_i m \Norm{\DelS}_{1,2} 		 \label{eq:matB}
\end{align}
Putting together \eqref{eq:eigmin_arbit}, \eqref{eq:matA}, \eqref{eq:matA1} and \eqref{eq:matB} we get
\begin{align*}
\eigmin(\mH(\vtheta^i_S + \DelS)) &\geq \eigmin(\mH(\vtheta^i_S)) - \InNormII{\mA(\vtheta_S)} \\
&\geq \eigmin(\mH(\vtheta^i_S)) -  (d_i + 1) \Bigl(\max_{\vx \in \A} \max_{j \in \Set{0} \union \Nb_i} 
		\InNormII{\mB_{j,j}(\vx; \vtheta)} \Bigr)\\
&\geq \eigmin(\mH(\vtheta^i_S)) - \frac{1}{4}(d_i + 1) m_i m \Norm{\DelS}_{1,2}.
\end{align*}
\end{proof}
\begin{proof}[Proof of Lemma \ref{lemma:l2_bound} (Error of the i-th estimator on the support set)]
~\\
To simplify notation in the proof, we will write $S$ instead of $S_i$. 
Recall that $L^i(\Data; \vtheta)$ is the empirical loss for the $i$-th player for parameter $\vtheta$.
For the purpose of the proof we will often write $L(\vtheta)$ instead of $L^i(\Data; \vtheta)$.
Let $F(\vtheta) = L(\vtheta) + \lambda \Norm{\vtheta}_{1,2}$. For any $\vtheta \in \Theta^i$,
let $\DelS = \vtheta_S - \vtheta^i_S$ denote the difference between $\vtheta$ and the true parameter $\vtheta^i$
on the true support set $S$. We introduce the following shifted and reparameterized regularized loss function:
\begin{align}
\Ft(\DelS) = \underbrace{L(\vtheta^i_S + \DelS) - L(\vtheta^i_S)}_{\text{term 1}} +
	 \underbrace{\lambda (\Norm{\vtheta_S^i + \DelS}_{1,2} - \Norm{\vtheta^i_S}_{1,2})}_{\text{term 2}}, \label{eq:f_tilde}
\end{align}
which takes the value $0$ at the true parameter $\vtheta^i$, i.e., $\Ft(\vect{0}) = 0$. 
Let $\DelhS = \vhtheta^i_S - \vtheta^i_S$, where $\vhtheta^i$ minimizes $F(\vtheta)$.
Since $\vhtheta^i$ minimizes $F(\vtheta)$, we must have that $\Ft(\DelhS) \leq 0$. Thus, in order to upper bound
$\Norm{\DelhS}_{1,2} = \Norm{\vhtheta_S - \vtheta^i_S}_{1,2} \leq b$, we show that there exists an $\ell_{1,2}$
ball of radius $b$ such that function $\Ft(\DelS)$ is strictly positive on 
the surface of the ball. To see this, assume the contrary, i.e,
$\forall \vDel \in \Theta^i \Land \Norm{\DelS}_{1,2} = b,\, \Ft(\DelS) > 0$,
but $\DelhS$ lies outside the ball, i.e., $\Norm{\DelhS}_{1,2} > b$. Then, there 
exists a $t \in (0, 1)$ such that $(1 - t) \vect{0} + t \DelhS$ lies on the surface of the ball, i.e.,
$\Norm{(1 - t) \vect{0} + t \DelhS}_{1,2} = b$. However, by convexity of $\Ft$ we have that
\begin{align*}
 	0 < \Ft((1 - t) \vect{0} + t \DelhS) \leq (1 - t) \Ft(\vect{0}) + t \Ft(\DelhS) = t \Ft(\DelhS),
\end{align*}
which implies that $\Ft(\DelhS) > 0$ and therefore is a contradiction to the fact that $\Ft(\DelhS) \leq 0$. 
Going forward, our strategy would be to lower bound $\Ft(\DelS)$ in terms of $\Norm{\DelS}_{1,2} = b$.
We then set the lower bound to $0$ and solve for $b$, to obtain the radius of the $\ell_{1,2}$ ball
on which the function is non-negative. Towards that end we first lower bound the first term of \eqref{eq:f_tilde}.

Using the Taylor's theorem and the Cauchy-Schwartz inequality, for some $t \in [0, 1]$, we have:
\begin{align}
&L(\vtheta^i_S + \DelS) - L(\vtheta^i_S) \notag \\
&\quad  = \Grad L(\vtheta^i_S)^T \DelS + \DelS^T \Grad^2L(\vtheta^i_S + t \DelS) \DelS, \notag \\
&\quad \geq - \Norm{\Grad L(\vtheta^i_S)}_{\infty, 2} \Norm{\DelS}_{1,2}  
	+ \NormII{\DelS}^2 \eigmin(\mH(\Data; \vtheta^i_S + t \DelS)) \notag  \\
&\quad \geq - \frac{b\lambda}{2} 
	+ \frac{\Norm{\DelS}^2_{1, 2}}{d_i + 1} \eigmin(\mH(\Data; \vtheta^i_S + t \DelS)) \notag\\
&\quad \geq - \frac{b\lambda}{2}  +
	\frac{b^2}{2(d_i + 1)} \left( \cmin - \frac{m^2 b (d_i + 1)}{4} \right)  \notag \\
&\quad \geq - \frac{b \lambda}{2}  
	+	\frac{b^2 \cmin}{4 (d_i + 1)}, \label{eq:term1_lb}
\end{align}
where the third follows from our assumption that $\gNormInII{\Grad L(\vtheta^i)} \leq \nicefrac{\lambda}{2}$
and the fact for any vector $\vx$, $\NormII{\vx} \geq (\nicefrac{1}{\sqrt{g}}) \Norm{\vx}_{1,2}$ 
where the $\ell_{1,2}$ norm is evaluated over $g$ groups.
The fourth line follows from Lemma \ref{lemma:population_eigenvalue_arbit} with $t = 1$ and Lemma \ref{lemma:min_eigenvalue_sample}.
Finally, in the last line we assumed that $b \leq \nicefrac{2\cmin}{(m^2 (d_i + 1))}$ --- 
an assumption that we will verify momentarily.
The second term of $\eqref{eq:f_tilde}$ is easily lower bounded using the reverse triangle inequality as follows:
\begin{align}
\lambda (\Norm{\vtheta_S^i + \DelS}_{1,2} - \Norm{\vtheta^i_S}_{1,2})
& \geq - \lambda \Norm{\DelS}_{1,2} = - b \lambda \label{eq:term2_lb}
\end{align}
Putting together \eqref{eq:f_tilde}, \eqref{eq:term1_lb} and \eqref{eq:term2_lb} we get:
\begin{align*}
\Ft(\DelS) \geq -\frac{b \lambda}{2} + \frac{b^2 \cmin}{4(d_i + 1)} - b \lambda.
\end{align*}
Setting the above to zero and solving for $b$ we get:
\begin{align*}
b = \frac{6\lambda(d_i + 1)}{\cmin}.
\end{align*}
Finally, coming back to our assumption that $b \leq \nicefrac{2\cmin}{(m^2 (d_i + 1))}$,
it is easy to show that the assumption holds if the regularization parameter $\lambda$ satisfies:
\begin{align*}
\lambda \leq \frac{\cmin^2}{3m^2(d_i + 1)^2},
\end{align*}
The lower bound on the number of samples is obtained by ensuring that the lower bound on 
$\lambda$ is less than the upper bound.
The final claim follows from using the high probability bound on $\gNormInII{\Grad L(\vtheta^i)}$ from
Lemma \ref{lemma:gradient_bound}. 
\end{proof}
\begin{proof}[Proof of Lemma \ref{lemma:norm_bound} (Error of the i-th parameter estimator)]
~\\
$\vDel \defeq \vhtheta_i - \vtheta^i$. We will denote the true support of $\vtheta^i$ by $S$, 
and the complement of $S$ by $S^c$. We will also simply write $L(\vtheta)$ instead of $L^i(\Data; \vtheta)$.
For any vector $\vy$, let $\vy_{\Sb}$ denote the vector $\vy$ with elements not in the support
set $S$ zeroed out, i.e., 
\begin{align*}
(\vy_{\Sb})_j =\left\{ \begin{array}{cl}
y_j & j \in S, \\
0 & \text{otherwise}
\end{array}\right.
\end{align*}
Then by definition of $S$, we have that $\Norm{\vtheta^i_{\Sb}}_{1,2} = \Norm{\vtheta^i}_{1,2}$.
\begin{align*}
\gNormIII{\vhtheta^i} &= \gNormIII{\vtheta^i + \vDel} = \gNormIII{\vtheta^i_{\Sb} + \DelSb + \DelSbc} \\
&= \gNormIII{\vtheta^i_{\Sb} + \DelSb} + \gNormIII{\DelSbc} \\
&\geq \gNormIII{\vtheta^i_{\Sb}} - \gNormIII{\DelSb} + \gNormIII{\DelSbc},
\end{align*}
where in the second line follows from the fact that the index sets $S$ and $S^c$ have non-overlapping groups,
and in the last line we used the reverse triangle inequality. Rearranging the terms of the previous equation,
and from the fact that $\Norm{\vtheta^i_{\Sb}}_{1,2} = \Norm{\vtheta^i}_{1,2}$, we get:
\begin{align}
\gNormIII{\vtheta^i} - \gNormIII{\vhtheta^i} \leq \gNormIII{\DelSb} - \gNormIII{\DelSbc} \label{eq:l1norm_bound1}
\end{align}

Next, by optimality of $\vhtheta^i$ we have that 
$L(\vtheta^i) + \lambda \gNormIII{\vtheta^i} \geq L(\vhtheta^i) + \lambda \gNormIII{\vhtheta^i}$. 
Rearranging the terms and continuing, we get 
\begin{align}
\lambda(\gNormIII{\vtheta^i} - \gNormIII{\vhtheta^i})  &\geq L(\vhtheta^i) - L(\vtheta^i) \notag \\
 &\geq (\Grad L(\vhtheta^i))^T (\vhtheta^i - \vtheta^i) \notag \\
 &\geq - \gNormInII{\Grad L(\vhtheta^i))^T} \gNormIII{\vDel} \notag \\
  &\geq - \frac{\lambda}{2} \gNormIII{\vDel}, \label{eq:l1norm_bound2}
\end{align}
where the third line follows from the convexity of $L(\parg)$, the fourth line follows from 
the Cauchy-Schwartz inequality and the last line follows from our assumption that 
$\lambda \geq 2 \gNormInII{\Grad L(\vtheta^i)}$. Thus, from \eqref{eq:l1norm_bound1} and
\eqref{eq:l1norm_bound2} we have that
\begin{align*}
\frac{1}{2} \gNormIII{\vDel} &\geq \gNormIII{\DelSbc} - \gNormIII{\DelSb} \\
\implies \frac{1}{2} \gNormIII{\DelSb} + \frac{1}{2} \gNormIII{\DelSbc} &\geq \gNormIII{\DelSbc} - \gNormIII{\DelSb} \\
\implies 3 \gNormIII{\DelSb} &\geq \gNormIII{\DelSbc}.
\end{align*}
Finally, from the above inequality, we have $\gNormIII{\vDel} = \gNormIII{\DelSb} + \gNormIII{\DelSbc} \leq 4 \gNormIII{\DelS}$.
The final result follows from the upper bound on $\gNormIII{\DelS}$ derived in Lemma \ref{lemma:l2_bound}.
\end{proof}
\begin{lemma}[Maximum eigenvalue of block positive semi-definite matrix]
\label{lemma:max_eigenvalue_block_psd}
Let $\mX \in \R^{n \times n} \succeq \vect{0}$ be any positive semi-definite matrix,
with $\mX_{i,i}$ being the $i$-th diagonal block of $\mX$. Then
\begin{align*}
\eigmax(\mX) \leq \sum_{i} \eigmax(\mX_{i,i})
\end{align*}
\end{lemma}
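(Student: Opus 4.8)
The plan is to work from the variational characterization $\eigmax(\mX) = \max_{\NormII{\vy} = 1} \vy^T \mX \vy$ together with a Gram factorization of $\mX$. Since $\mX \succeq \vect{0}$, I would write $\mX = \mathbf{M}^T \mathbf{M}$ (for instance $\mathbf{M} = \mX^{1/2}$, which is $n \times n$) and partition the columns of $\mathbf{M}$ conformally with the block structure of $\mX$, so that $\mathbf{M} = [\mathbf{M}_1 \,|\, \cdots \,|\, \mathbf{M}_k]$ and the $(i,j)$-th block of $\mX$ equals $\mathbf{M}_i^T \mathbf{M}_j$; in particular $\mX_{i,i} = \mathbf{M}_i^T \mathbf{M}_i$.

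Next, for any unit vector $\vy = (\vy_1, \ldots, \vy_k)$ partitioned conformally, I would expand $\vy^T \mX \vy = \sum_{i,j} \vy_i^T \mathbf{M}_i^T \mathbf{M}_j \vy_j = \aNormII{\sum_{i} \mathbf{M}_i \vy_i}^2$. The triangle inequality for the Euclidean norm gives $\aNormII{\sum_i \mathbf{M}_i \vy_i} \le \sum_i \NormII{\mathbf{M}_i \vy_i}$, and since $\NormII{\mathbf{M}_i \vy_i}^2 = \vy_i^T \mX_{i,i} \vy_i \le \eigmax(\mX_{i,i}) \NormII{\vy_i}^2$, we get $\vy^T \mX \vy \le \bigl(\sum_i \sqrt{\eigmax(\mX_{i,i})}\,\NormII{\vy_i}\bigr)^2$.

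Finally, Cauchy–Schwarz applied to the vectors $(\sqrt{\eigmax(\mX_{i,i})})_i$ and $(\NormII{\vy_i})_i$ yields $\bigl(\sum_i \sqrt{\eigmax(\mX_{i,i})}\,\NormII{\vy_i}\bigr)^2 \le \bigl(\sum_i \eigmax(\mX_{i,i})\bigr)\bigl(\sum_i \NormII{\vy_i}^2\bigr) = \sum_i \eigmax(\mX_{i,i})$, where the last equality uses $\NormII{\vy} = 1$. Taking the maximum over unit $\vy$ finishes the argument.

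There is no real obstacle here; the only point requiring care is the order of the two estimates — the triangle inequality must be used before bounding each $\NormII{\mathbf{M}_i \vy_i}$, and the concluding Cauchy–Schwarz must pair $\sqrt{\eigmax(\mX_{i,i})}$ with $\NormII{\vy_i}$ so that the $\NormII{\vy_i}^2$ terms sum to $1$ rather than leaving a spurious factor of $k$ (which is what a naive application would produce). An alternative route is induction on the number of blocks, reducing to the two-block case, but the Gram-factorization argument treats all blocks at once and is cleaner.
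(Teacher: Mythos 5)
Your proof is correct, and it takes a genuinely different route from the paper. You factor $\mX = \mat{M}^T\mat{M}$, write $\vy^T \mX \vy = \aNormII{\sum_i \mat{M}_i \vy_i}^2$, and combine the triangle inequality with the bound $\NormII{\mat{M}_i \vy_i}^2 = \vy_i^T \mX_{i,i}\vy_i \le \eigmax(\mX_{i,i})\NormII{\vy_i}^2$ and a Cauchy--Schwarz step that pairs $\sqrt{\eigmax(\mX_{i,i})}$ with $\NormII{\vy_i}$; this handles all $k$ blocks in a single pass. The paper instead treats the two-block case first: it normalizes the two sub-vectors of a unit vector $\vx$, compresses the quadratic form to a $2\times 2$ PSD matrix $\mY$, bounds $\eigmax(\mY) \le \Tr{\mY} \le \eigmax(\mX_{1,1}) + \eigmax(\mX_{2,2})$ (after enlarging the feasible set of coefficient vectors to the full unit sphere), and then obtains the general statement by recursively splitting the diagonal blocks. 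What your argument buys is the elimination of both the recursion and the set-enlargement step, at the mild cost of invoking a square-root (or any Gram) factorization of $\mX$; the paper's argument stays entirely at the level of quadratic forms. Both yield exactly the claimed bound $\eigmax(\mX) \le \sum_i \eigmax(\mX_{i,i})$, and your observation about the order of the estimates (triangle inequality before the per-block eigenvalue bound, then Cauchy--Schwarz normalized so that $\sum_i \NormII{\vy_i}^2 = 1$) is precisely what avoids a spurious factor of $k$.
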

\begin{proof}
We will prove the result by decomposing $\mX$ into two blocks as follows:
\begin{align*}
\mX = \matrx{\mX_{1,1} & \mX_{1,2} \\ \mX_{2,1} & \mX_{2,2}} ,
\end{align*}
where $\mX_{1,1} \in \R^{n_1 \times n_1}$, $\mX_{2,2} \in \R^{n_2 \times n_2}$ and $n_1 + n_2 = 1$.
The general result for multiple diagonal blocks is obtained by recursively 
decomposing the blocks $\mX_{1,1}$ and $\mX_{2,2}$.
Any unit vector $\vx$ can be written as $\vx = c_1(\vx) \vx_1(\vx) + c_2(\vx) \vx_2(\vx)$,
with $\vx_1(\vx) = (\nicefrac{x_1}{\NormII{\vx_1(\vx)}}, \ldots, \nicefrac{x_{n_1}}{\NormII{\vx_1(\vx)}}, \vect{0})$,
 $\vx_2(\vx) = (\vect{0}, \nicefrac{x_{n_2}}{\NormII{\vx_2(\vx)}}, \ldots, \nicefrac{x_{n}}{\NormII{\vx_2(\vx)}})$,
and $c_1(\vx) = \NormII{\vx_1(\vx)}$ (similarly $c_2(\vx)$). For notational simplicity we will drop the $(\vx)s$.
Note that $c_1^2 + c_2^2 = 1$, thus $\vc = (c_1, c_2)$ is also a unit vector. Further, for any
unit vector $\vx$, we have $\vx^T \mX \vx = \vc^T \mY \vc$, where
\begin{align*}
        \mY = \matrx{\vx_1^T \mX \vx_1 & \vx_1^T \mX \vx_2 \\
                \vx_2^T \mX \vx_1 & \vx_2^T \mX \vx_2} \in \R^{2 \times 2}.
\end{align*}
Note that $\vx_1^T \mX \vx_1 \leq \eigmax(\mX_{1,1})$ and $\vx_2^T \mX \vx_2 \leq \eigmax(\mX_{2,2})$ for all $\vx$.
Thus, using the variational characterization of the maximum eigenvalue of $\mX$ we get:
\begin{align*}
\eigmax(\mX) &= \max_{\NormII{\vx} = 1} \vx^T \mX \vx \\
        &= \max_{\Set{\vc = (\NormII{\vx_1(\vx)}, \NormII{\vx_2(\vx)}) \,:\, \NormII{\vx} = 1}} \vc^T \mY \vc \\
        &\leq \max_{\NormII{\vc} = 1} \vc^T \mY \vc = \eigmax(\mY) \leq \Tr{\mY} && \text{(since $\mY$ is positive semi-definite)}\\
        &\leq  \eigmax(\mX_{1,1}) + \eigmax(\mX_{2,2}),
\end{align*}
where the third line follows from the fact that the maximization is over a superset of the set 
$\Set{\vc = (\NormII{\vx_1(\vx)}, \NormII{\vx_2(\vx)}) \,:\, \NormII{\vx} = 1}$.
\end{proof}
\section{Details of Synthetic Experiments}
\label{sec:experiments}
We generated random polymatrix games $\Gm$ by first generating random graphs over $p$ players with degree exactly $d$,
and number of pure strategies $m = 3$ per player. For each edge $(i,j)$ in the graph, we set the payoffs as follows:
\begin{align*}
u^{i,i}(a) &= 0 && (\forall a \in [3]) \\
u^{i,j}(a,b) &\sim \mathcal{N}(0, 2) && (\forall a \in [2] \Land b \in [3]) \\
u^{i,j}(3,b) &= 0 && (\forall b \in [3])
\end{align*}
We then generated a data set $\Data$ from the
game using the local noise model \eqref{eq:local_noise}, with the noise parameter $q_i = 0.6$ for all $i \in [p]$. We then used our
method to learn a game $\Gmh$ from the data set $\Data$, and computed $\Ind{\NE(\Gmh) = \NE(\Gm)}$. 
We then estimated the probability of successful PSNE recovery, $\Prob{\NE(\Gmh) = \NE(\Gm)}$, across 40 randomly 
sampled polymatrix games. Figure \ref{fig:synthetic_res} plots the probability of successful PSNE recovery
as the number of samples is varied as $n = 10^c (d+1)^2 \log (\nicefrac{2 p (d+1)}{\delta})$ and for various values
of $d \in \Set{1, 3 , 5}$, with $c$ being the control parameter and $\delta = 0.01$. 
\section{Experiments on real-world data}
\label{sec:real_world}
We validated our method on three publicly available real-world data sets containing 
(a) U.S. supreme court justices rulings, (b) voting records of senators from 
the 114th U.S. congress, and (c) roll-call votes in the U.N. General Assembly. 
We present evaluations of our method for each of the data set below.

\subsection{Supreme court voting records}
We analyzed two data sets of supreme court rulings: the first data set contains rulings of 
9 justices across 512 cases spanning years 2010 to 2014,
while the second data set contains rulings of 8 justices across 75 cases from year 2015 onwards 
\footnote{All the data sets are publicly available at \url{http://scdb.wustl.edu}.}.
We pre-processed the data, according the available code book, to map the vote of each justice, which was originally an integer between 1 to 8,
to an integer between 1 to 3. Votes $\Set{1,3,4,5}$ were mapped to $1$ and was interpreted as ``voting with majority'',
votes $\Set{6,7,8}$ were mapped to $2$ and was interpreted as ``not participating in the decision'' , 
while vote $3$ was mapped to $2$ and was interpreted as ``dissent''. 
Thus, after pre-processing, each justice's vote was an integer between 1 to 3,
with 1 corresponding to majority, 2 corresponding to abstention, and 3 corresponding to dissent.

After pre-processing the data, we learned a polymatrix game over supreme court justices using our algorithm.
The regularization parameter $\lambda$ was set according to Theorem \ref{thm:suff} with reasonable values for different
unknown population parameters. A more principled way to chose the regularization parameter $\lambda$ is to assume a specific
observation model, for instance, the global or local noise model, and then using crossvalidation to maximize the log-likelihood. 
The game graphs are shown in Figure \ref{fig:sc_graph} and the PSNE sets are shown in Table \ref{tab:sc_psne}
for the two supreme court rulings data sets (years 2010-2014 and year 2015 onwards).

From \ref{fig:sc_graph} it is clear that our method recovers the well-established ideologies 
of the supreme court justices. This is especially evident for the graph learned from the first data set --- there are two strongly
connected components corresponding to the conservative and liberal bloc within the supreme court. The PSNE set recovered by our
algorithm is also quite revealing. In both the data sets, a unanimous vote of 1 is a Nash equilibrium. Justice Kennedy, who
has a moderate jurisprudence, always votes with the majority in the PSNE set. Further, strategy profiles
where the conservative blocs and liberal blocs vote unanimously but dissent against each other are also in the PSNE set.
In the second data set, there is a strongly connected component between the justice 
Kagan, Kennedy, and Breyer --- this also bears out in the corresponding PSNE set 
where the strategies of the three justices are identical.

To compute the price of anarchy (PoA), we shifted all the payoff matrices by a constant to make the payoffs non-negative. Note that
this does not change the PSNE set of the game. The price of anarchy was computed to be the ratio between the maximum welfare
across all strategy profiles and the minimum welfare across all strategy profiles in the PSNE set. The PoA for the two data
sets were, respectively, 1.9104 and 1.6115.
\begin{figure}[htbp]
\centering
\includegraphics[width=0.3\linewidth]{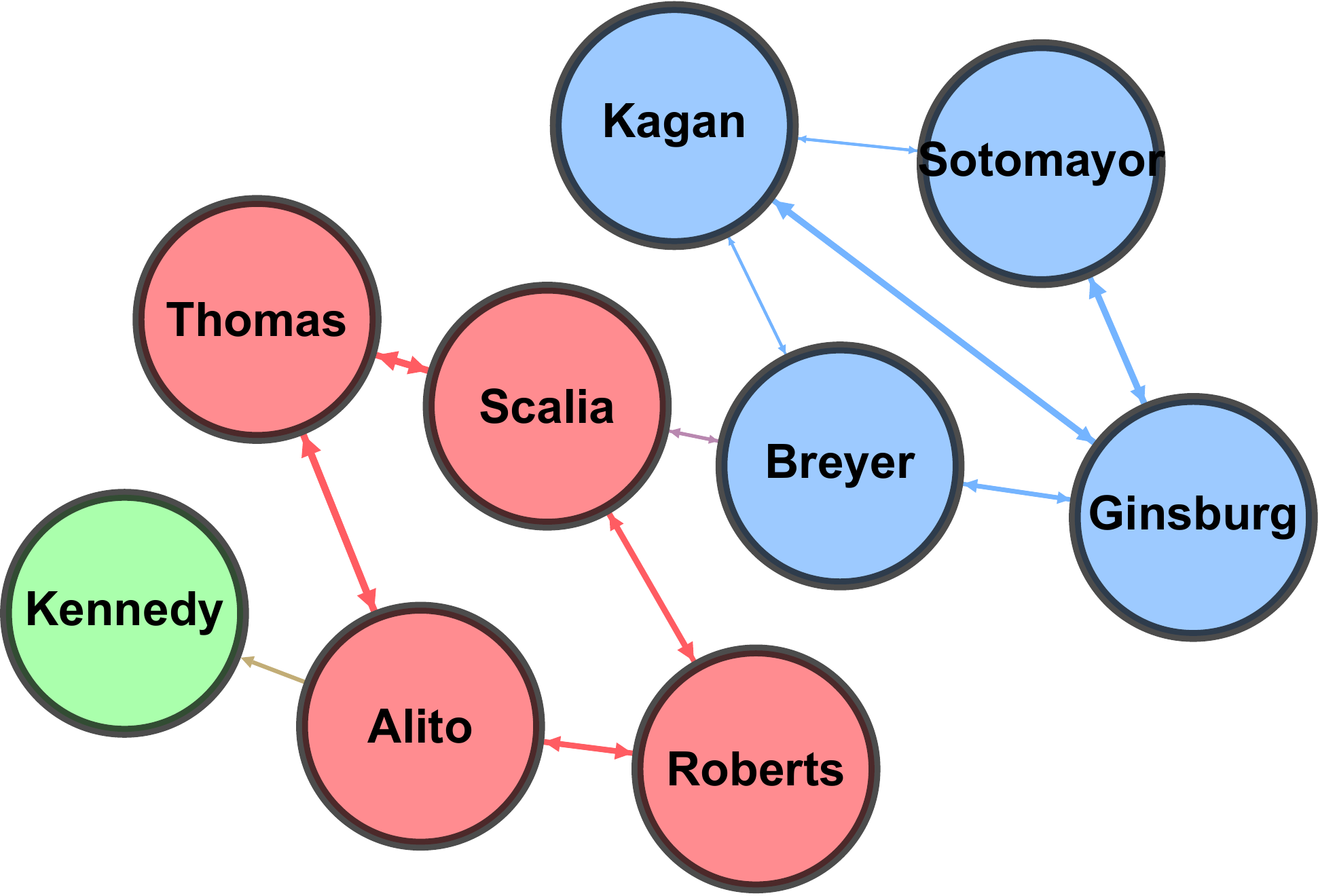}\hspace*{2cm}%
\includegraphics[width=0.24\linewidth]{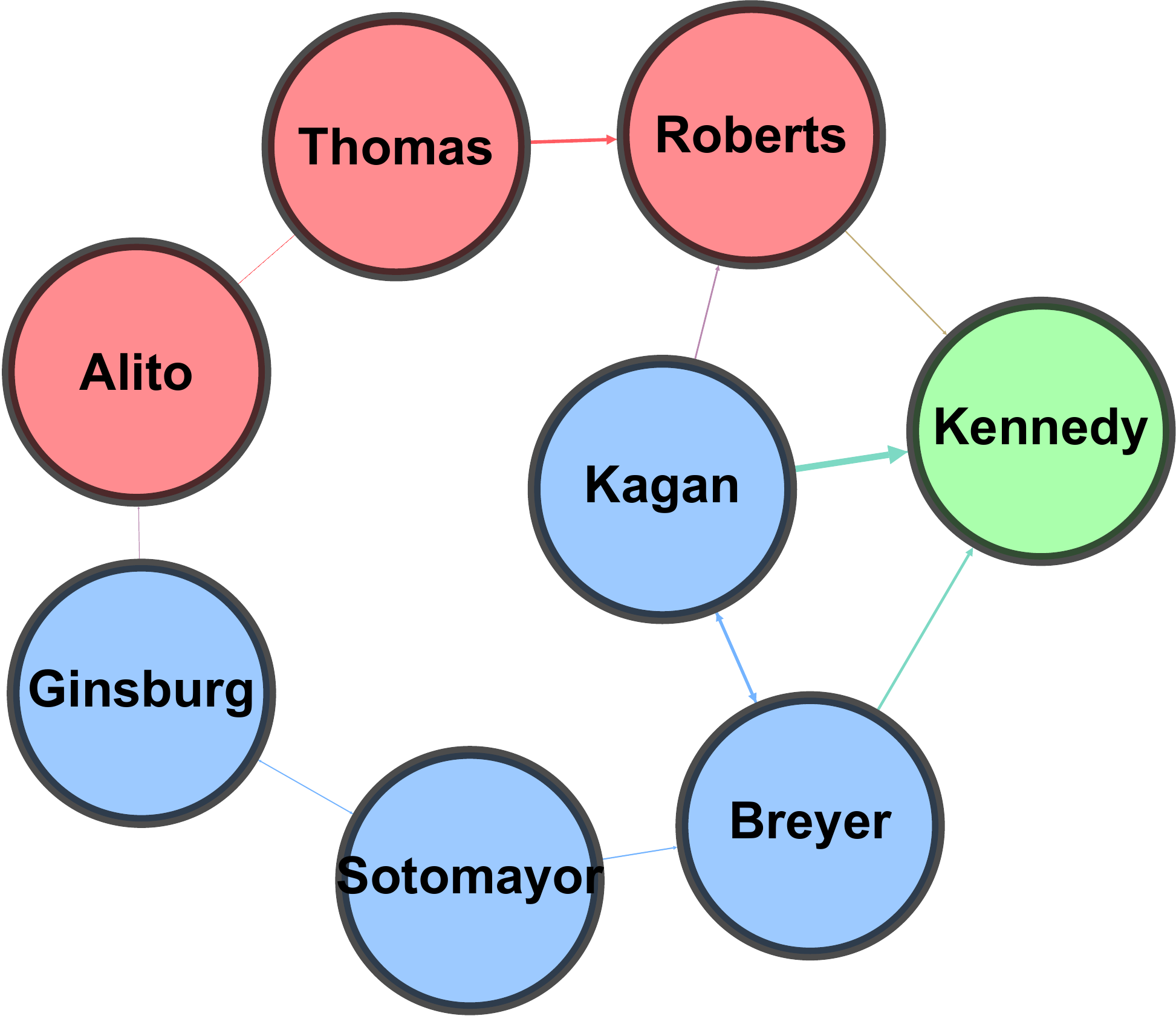}
\caption{The graphical game recovered from supreme court rulings data set 1 (years 2010-2014)
on the left, and data set 2 (year 2015 onwards) on the right. Justice Thomas, Scalia, Roberts and Alito are widely
known to be conservative and are denoted by the color \Cons{~}, while Justice Breyer, Kagan, Sotomayor and
Ginsburg, who are known to have a more liberal jurisprudence, are denoted by color \Lib{~}. Justice Kennedy,
who has a reputation of being moderate, is denoted by the color \Neu{~}.
The game graph was generated by adding all edges $(i, j)$ 
if the corresponding payoff matrix $u^{i,j}$ was not all zeros. The average ``influence'' from $j$ to $i$
was calculated as the mean absolute payoff, i.e., $\frac{1}{6}\sum_{a=2}^{3} \sum_{b=1}^3 \Abs{u^{i,j}(a, b)}$.
The thickness of the edge denotes this influence of player $j$ on $i$. Only the top $50\%$
of the edges, in terms of influence, are shown.
 \label{fig:sc_graph}}
\end{figure}
\begin{table}[htbp]
\centering
\setlength{\tabcolsep}{0pt}
\resizebox{0.8\textwidth}{!}{%
\begin{tabular}{ccccccccc}
\hline
\Cons{Thomas} & \Cons{Scalia} & \Cons{Alito} & \Cons{Roberts}
	& \Neu{Kennedy} & \Lib{Breyer} & \Lib{Kagan} & \Lib{Ginsburg} & \Lib{Sotomayor} \\
\hline
     1  &   1  &   1  &   1  &   1  &   1  &   1  &   1  &   1 \\
     1  &   1  &   1  &   1  &   1  &   3  &   3  &   3  &   3 \\
     2  &   2  &   2  &   2  &   1  &   2  &   2  &   2  &   2 \\
     3  &   3  &   3  &   3  &   1  &   1  &   1  &   1  &   1 \\
     3  &   3  &   3  &   3  &   1  &   3  &   3  &   3  &   3 \\
\hline     
\end{tabular}
}
\resizebox{0.8\textwidth}{!}{%
\begin{tabular}{ccccccccc}
\hline
\Cons{Thomas} & \Cons{Alito} & \Cons{Roberts}
	& \Neu{Kennedy} & \Lib{Breyer} & \Lib{Kagan} & \Lib{Ginsburg} & \Lib{Sotomayor} \\
\hline
     1  &   1  &   1  &   1  &   1  &   1  &   1  &   1 \\
     2  &   2  &   2  &   2  &   2  &   2  &   2  &   2 \\
     2  &   2  &   2  &   2  &   2  &   2  &   3  &   3 \\
     3  &   3  &   3  &   2  &   2  &   2  &   3  &   3 \\
\hline     
\end{tabular}
}
\caption{The PSNE set learned from supreme court rulings data sets 1704 (top) and 1705 (bottom) respectively. 
Colors represent \Cons{conservative}, \Lib{liberal}, and \Neu{neutral} justices respectively. The price of
anarchy for the two data sets was computed to be 1.9 and 1.6 respectively.
\label{tab:sc_psne}}
\end{table}

\subsection{Senate voting records}
We analyzed U.S. congressional voting records for the second session of the 114th congress (January 4, 2016 to January 3, 2017)
\footnote{The data set is publicly available at \url{http://www.senate.gov/legislative/votes.htm}}. The data set comprised of
the votes of 100 senators on 63 bills.
The votes were pre-processed to take one of the three values: 1 (``yes''), 2 (``abstention''), and 3 (``no'').
After pre-processing the data set we ran our algorithm to recover a polymatrix game from congressional voting records. 
Figure \ref{fig:sen_graph} shows the recovered game graph. Once again our method recovers the connected components 
corresponding the republicans and democrats. Interestingly, the connected components also have a nice geographic interpretation,
for instance, the graph groups senators from Idaho, New Mexico, New York and midwestern states in their respective connected components.
Strategy profiles where the overwhelming majority of senators in a connected component vote ``yes'' are in equilibria.

\begin{figure}[htbp]
\centering
\includegraphics[width=0.4\linewidth]{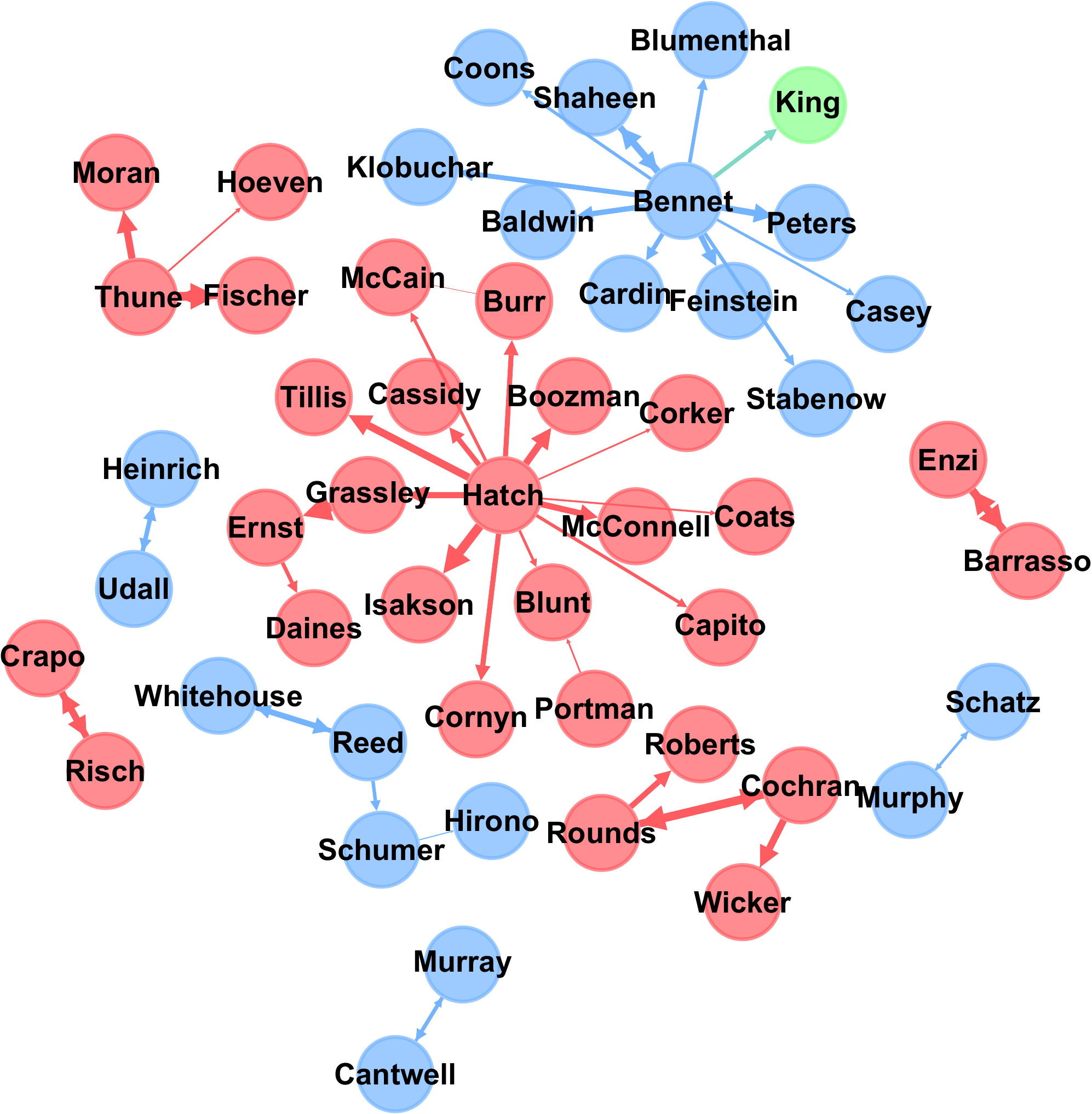} \hspace*{2cm} %
\includegraphics[width=0.4\linewidth]{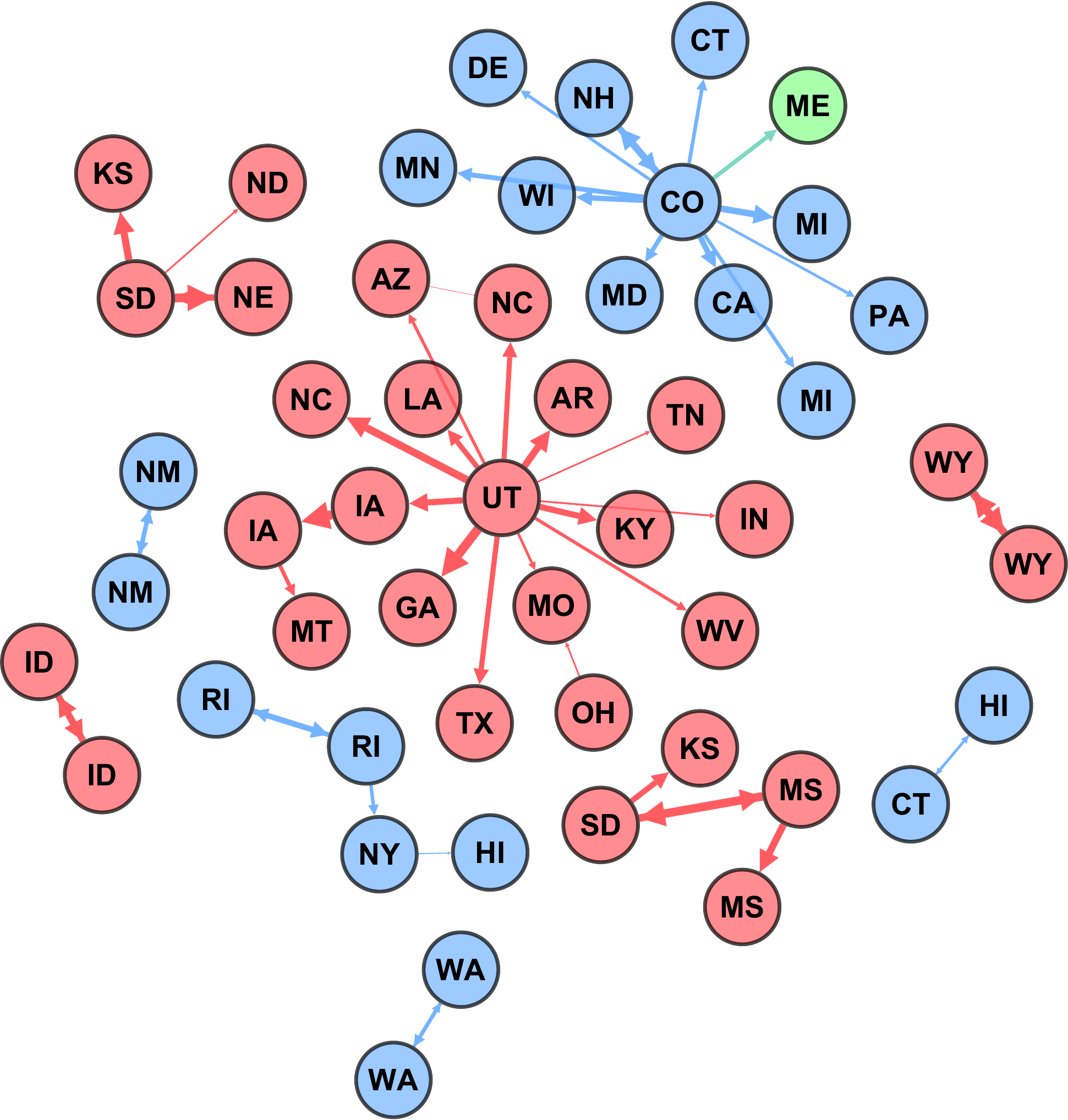}
\caption{The game graph learned from 114th U.S. congressional voting records. Only nodes with degree
greater than one are shown. Colors represent the following: \Lib{Democrat}, \Cons{Republican}, \Neu{Independent}.
The graph on the right shows the states that the senators belong to. The thickness of the edges denote the amount
of influence, computed as the mean absolute payoff, between the senators. Only nodes with degree at least 1 are shown.
\label{fig:sen_graph}}
\end{figure}

\begin{table}[htbp]
\centering
\setlength{\tabcolsep}{0pt}
\resizebox{0.9\textwidth}{!}{%
\begin{tabular}{cccccccccccc}
\hline
\Lib{Baldwin} & \Lib{Bennet} & \Lib{Blumenthal} & \Lib{Cardin} & \Lib{Casey}
 & \Lib{Coons} & \Lib{Feinstein} & \Lib{King} & \Lib{Klobuchar} & \Lib{Peters} & \Lib{Shaheen} & \Lib{Stabenow} \\
\hline
1 & 1 & 1 & 1 & 1 & 1 & 1 & 1 & 1 & 1 & 1 & 1 \\
\hline
\end{tabular}%
}\\
\resizebox{0.9\textwidth}{!}{%
\setlength{\tabcolsep}{0pt}
\begin{tabular}{cccc}
\hline
\Cons{Cochran} & \Cons{Roberts} & \Cons{Rounds} & \Cons{Wicker} \\
\hline
1 & 1 & 1 & 1 \\
\hline
\end{tabular}%
\setlength{\tabcolsep}{0pt}
\begin{tabular}{cccc}
\hline
\Cons{Fischer} & \Cons{Hoeven} & \Cons{Moran} & \Cons{Thune} \\ 
\hline
1 & 1 & 1 & 1 \\
3 & 3 & 3 & 3 \\
\hline
\end{tabular}%
\setlength{\tabcolsep}{0pt}
\begin{tabular}{cccc}
\hline
\Lib{Hirono} & \Lib{Reed} & \Lib{Schumer} & \Lib{Whitehouse} \\
\hline
1 & 1 & 1 & 1 \\
3 & 3 & 3 & 3 \\
\hline
\end{tabular}%
}\\
\resizebox{0.9\textwidth}{!}{%
\setlength{\tabcolsep}{0pt}
\begin{tabular}{ccccccccccccccccc}
\hline
\Cons{Blunt} & \Cons{Boozman} & \Cons{Burr} & \Cons{Capito} & \Cons{Cassidy} & \Cons{Coats} & \Cons{Corker}
& \Cons{Cornyn} & \Cons{Daines} & \Cons{Ernst} & \Cons{Grassley} & 
\Cons{Hatch} & \Cons{Isakson} & \Cons{McCain} & \Cons{McConnell} & \Cons{Portman} & \Cons{Tillis} \\
\hline 
1 & 1 & 1 & 1 & 1 & 1 & 1 & 1 & 1 & 1 & 1 & 1 & 1 & 1 & 1 & 1 & 1 \\
1 & 1 & 1 & 1 & 1 & 1 & 1 & 1 & 1 & 1 & 1 & 1 & 1 & 1 & 1 & 2 & 1 \\
1 & 1 & 1 & 1 & 1 & 1 & 1 & 1 & 1 & 1 & 1 & 1 & 1 & 1 & 1 & 3 & 1 \\
1 & 1 & 1 & 1 & 1 & 1 & 1 & 1 & 1 & 1 & 1 & 2 & 1 & 1 & 1 & 1 & 1 \\
1 & 1 & 1 & 1 & 1 & 1 & 1 & 1 & 1 & 1 & 1 & 2 & 1 & 1 & 1 & 2 & 1 \\
1 & 3 & 3 & 3 & 1 & 3 & 3 & 3 & 1 & 1 & 3 & 3 & 1 & 3 & 3 & 1 & 3 \\
3 & 1 & 1 & 1 & 1 & 1 & 1 & 1 & 1 & 1 & 1 & 2 & 1 & 1 & 1 & 3 & 1 \\
3 & 3 & 3 & 3 & 1 & 3 & 3 & 3 & 1 & 1 & 3 & 3 & 1 & 3 & 3 & 2 & 3 \\
3 & 3 & 3 & 3 & 1 & 3 & 3 & 3 & 1 & 1 & 3 & 3 & 1 & 3 & 3 & 3 & 3 \\
\hline
\end{tabular}%
}
\caption{The PSNE set for the major connected components in the game graph learned from congressional
voting records. The combined number of Nash equilibria computed across senators
with degree at least 1 was 144 and the price of anarchy was computed to be 2.6297.
 \label{tab:sen_psne}}
\end{table}

\subsection{United Nations voting data}
In our final real-world experiment we analyzed roll-call votes in the U.N. General Assembly. The data set
contained votes of 193 countries for 847 U.N. resolutions \footnote{ The data set can be downloaded from
\url{https://dataverse.harvard.edu/dataset.xhtml?persistentId=hdl:1902.1/12379}.}. Each vote could take
one of the three values in $\Set{1,2,3}$, with 1 denoting ``yes'', 2 denoting ``abstention'', and 3 denoting ``no''.
The game graph learned from the data set is shown in Figure \ref{fig:un_graph} while the PSNE set is shown in
Table \ref{tab:un_psne}. As evident from Figure \ref{fig:un_graph} our method recovered two major connected components:
the first consisting of members of the Arab League, and the second consisting of majorly Southeast Asian countries and a few 
other Caribbean islands. The PSNE set once again comprised of strategy profiles where the overwhelming members of a connected
component voted ``yes''. Within the component corresponding to the Arab league, Saudi Arabia, U.A.E., and Bahrain made up a small
coalition of countries that voted identically in the PSNE set.
\begin{figure}[htbp]
\centering
\includegraphics[width=0.6\textwidth]{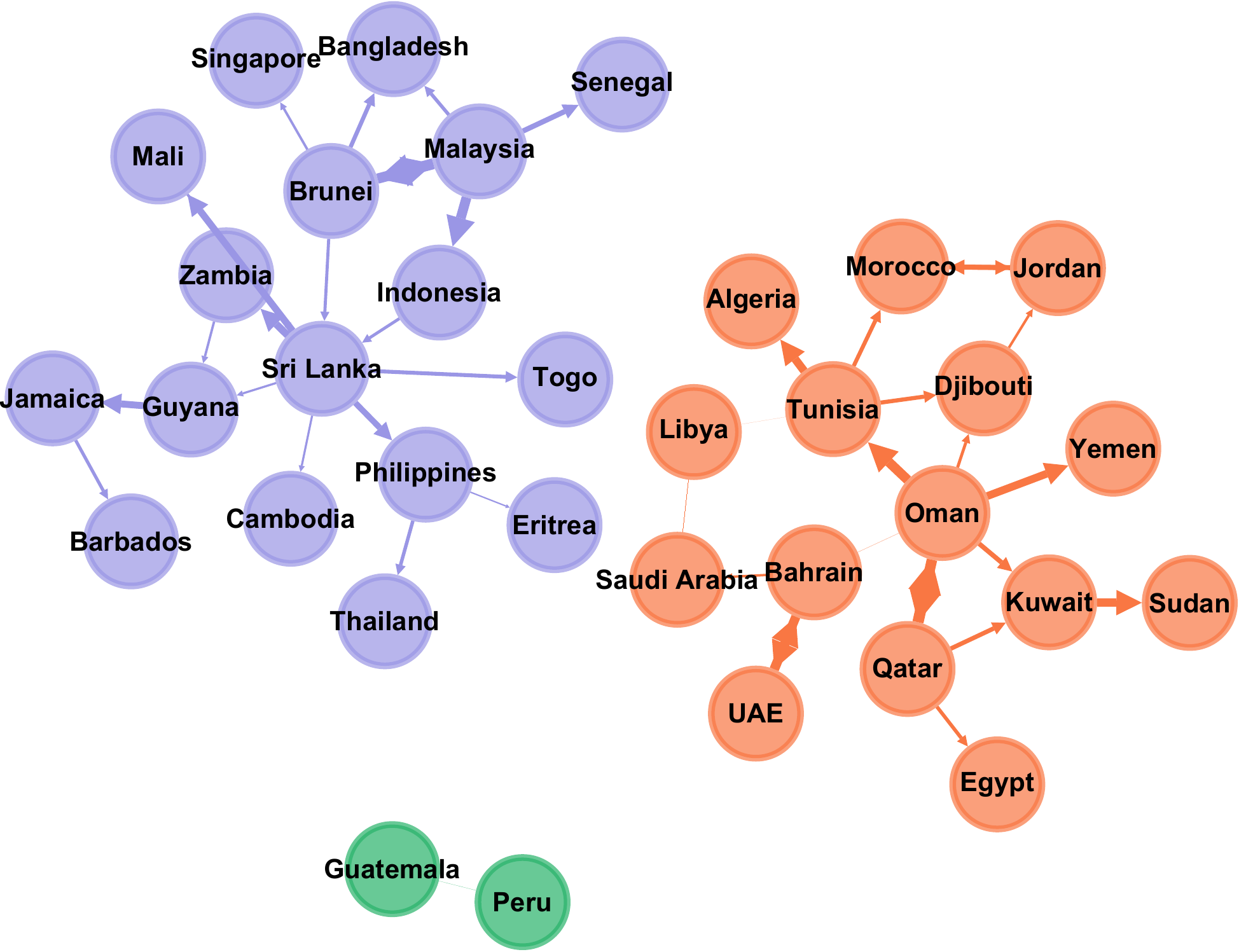}
\caption{The game graph learned from United Nations voting data set. Nodes belonging
to the same connected component have the same color. Only countries with degree at least 1 are shown.
\label{fig:un_graph}}
\end{figure}%
\begin{table}[htbp]
\resizebox{\textwidth}{!}{%
\setlength{\tabcolsep}{2pt}
\begin{tabular}{ccccccccccccccc}
\hline
Algeria & Bahrain & Djibouti & Egypt & Jordan & Kuwait & Libya & Morocco & Oman & Qatar & Saudi Arabia & Sudan & Tunisia & UAE & Yemen \\
\hline
1 & 1 & 1 & 1 & 1 & 1 & 1 & 1 & 1 & 1 & 1 & 1 & 1 & 1 & 1 \\
1 & 2 & 1 & 1 & 1 & 1 & 1 & 1 & 1 & 1 & 2 & 1 & 1 & 2 & 1 \\
\hline
\end{tabular}
}
\resizebox{\textwidth}{!}{%
\setlength{\tabcolsep}{2pt}
\begin{tabular}{ccccccccccccccccc}
\hline
Barbados & Bangladesh & Brunei & Cambodia & Eritrea & Guyana & Indonesia & Jamaica & Malaysia & Mali & Philippines & Senegal & Singapore & Sri Lanka & Thailand & Togo & Zambia \\
\hline
1 & 1 & 1 & 1 & 1 & 1 & 1 & 1 & 1 & 1 & 1 & 1 & 1 & 1 & 1 & 1 & 1 \\ 
1 & 2 & 2 & 1 & 1 & 1 & 2 & 1 & 2 & 1 & 1 & 1 & 2 & 1 & 1 & 1 & 1 \\
1 & 2 & 2 & 2 & 2 & 1 & 2 & 1 & 2 & 2 & 2 & 1 & 2 & 2 & 2 & 2 & 2 \\
2 & 2 & 2 & 2 & 2 & 2 & 2 & 2 & 2 & 2 & 2 & 1 & 2 & 2 & 2 & 2 & 2 \\
\hline
\end{tabular}
}
\caption{The PSNE set for the two major connected components in the game graph learned from
United Nations voting data set. The total number of PSNE was 24 and the price of anarchy was computed to be 3.07.
\label{tab:un_psne}}
\end{table}

\end{appendices}

\end{document}